\newtheorem{thm}{Theorem}
\newtheorem{lem}{Lemma}
\newtheorem{cor}{Corollary}
\newcommand{\cmark}{\ding{51}}
\newcommand{\xmark}{\ding{55}}
\newcommand{\argmin}{\operatornamewithlimits{argmin}}
\newcommand{\argmax}{\operatornamewithlimits{argmax}}
\newcommand{\stepsize}{\gamma} 
\DeclareFixedFont{\ttb}{T1}{txtt}{bx}{n}{8} 
\DeclareFixedFont{\ttm}{T1}{txtt}{m}{n}{8}  
\definecolor{deepblue}{rgb}{0,0,0.5}
\definecolor{deepred}{rgb}{0.6,0,0}
\definecolor{deepgreen}{rgb}{0,0.5,0}
\newcommand\pythonstyle{\lstset{
language=Python,
basicstyle=\ttm,
otherkeywords={Self},             
keywordstyle=\ttb\color{deepblue},
emph={cdef,double,long,int,unsigned,inline},          
emphstyle=\ttb\color{deepred},    
stringstyle=\color{deepgreen},
commentstyle=\color{deepgreen},
frame=tb,                         
showstringspaces=false            %
}}
\title{SAGA: A Fast Incremental Gradient Method With Support for Non-Strongly Convex Composite Objectives}
\author{
Aaron Defazio \\
Ambiata \thanks{The first author completed this work while under funding from NICTA. This work was partially supported by the MSR-Inria Joint Centre and a grant by the European Research Council (SIERRA project 239993).} \\
Australian National University, Canberra\\
\And
Francis Bach \\
INRIA - Sierra Project-Team \\ 
\'Ecole Normale Sup\'erieure, Paris, France \\
\AND
Simon Lacoste-Julien \\
INRIA - Sierra Project-Team \\
\'Ecole Normale Sup\'erieure, Paris, France \\
}
\def \E{{\mathbb E}} 
\DeclareMathOperator{\var}{Var} 
\DeclareMathOperator{\cov}{Cov} 
\begin{document}

\maketitle

\begin{abstract}
\vspace{-2mm}
In this work we introduce a new optimisation method called SAGA in the spirit of SAG, SDCA, MISO and SVRG, a set of recently proposed incremental gradient algorithms with fast linear convergence rates. SAGA improves on the theory behind SAG and SVRG, with better theoretical convergence rates, and has support for composite objectives where a proximal operator is used on the regulariser. Unlike SDCA, SAGA supports non-strongly convex problems directly, and is adaptive to any inherent strong convexity of the problem. We give experimental results showing the effectiveness of our method.
\end{abstract}

\section{Introduction}
\label{sec:intro}
\vspace{-2mm}
Remarkably, recent advances \citep{SAG, SDCA} have shown that it is possible to minimise strongly convex finite sums provably faster in expectation than is possible without the finite sum structure. This is significant for machine learning problems as a finite sum structure is common in the empirical risk minimisation setting. The requirement of strong convexity is  likewise satisfied in machine learning problems in the typical case where a quadratic regulariser is used.  

In particular, we are interested in minimising functions of the form
\[
f(x)=\frac{1}{n}\sum_{i=1}^{n}f_{i}(x), 
\]
where $x\in\mathbb{R}^d$, each $f_{i}$ is convex and has Lipschitz continuous derivatives with constant $L$.
We will also consider the case where each $f_{i}$ is strongly convex with constant $\mu$, and the ``composite" (or proximal) case where an additional regularisation function is added:
\[
F(x) = f(x) + h(x),
\]
where $h\colon \mathbb{R}^d \rightarrow \mathbb{R}^d$ is convex but potentially non-differentiable, and where the proximal operation of $h$ is easy to compute --- few incremental gradient methods are applicable in this setting~\citep{jota}\citep{prox-svrg}. 

Our contributions are as follows. In Section \ref{sec:algorithm} we describe the SAGA algorithm, a novel incremental gradient method. In Section \ref{sec:theory} we prove theoretical convergence rates for SAGA in the strongly convex case better than those for SAG \citep{SAG} and SVRG \citep{svrg}, and a factor of 2 from the SDCA \citep{SDCA} convergence rates. These rates also hold in the composite setting. Additionally, we show that like SAG but unlike SDCA, our method is applicable to non-strongly convex problems without modification. We establish theoretical convergence rates for this case also.
In Section \ref{sec:related-work} we discuss the relation between each of the fast incremental gradient methods, showing that each stems from a very small modification of another.
\vspace{-2mm}
\section{SAGA Algorithm}
\label{sec:algorithm}
\vspace{-2mm}
We start with some known initial vector $x^{0}\in\mathbb{R}^d$ and known derivatives $f_{i}^{\prime}(\phi^0_i)\in\mathbb{R}^d$ with $\phi^0_i = x^{0}$ for
each $i$. These derivatives are stored in a table data-structure of length $n$, or alternatively a $n\times d$ matrix.
For many problems of interest, such as binary classification and least-squares, only a single floating point value instead of a full gradient vector needs to be stored (see Section~\ref{sec:impl}).
SAGA is inspired both from SAG~\citep{SAG} and SVRG~\citep{svrg} (as we will discuss in Section~\ref{sec:related-work}). SAGA uses a step size of $\stepsize$ and makes the following updates, starting with $k=0$:

\noindent\framebox{\begin{minipage}[t]{0.98\textwidth}
\textbf{SAGA Algorithm:} Given the value of $x^{k}$ and of each $f_{i}^{\prime}(\phi_{i}^{k})$
at the end of iteration $k$, the updates for iteration $k+1$ is
as follows:
\begin{enumerate}
\item Pick a $j$ uniformly at random.
\item Take $\phi_{j}^{k+1}=x^{k}$, and store $f_{j}^{\prime}(\phi_{j}^{k+1})$ in the table. All other entries in the table remain unchanged. The quantity $\phi_{j}^{k+1}$ is not explicitly stored.
\item Update $x$ using $f_{j}^{\prime}(\phi_{j}^{k+1})$, $f_{j}^{\prime}(\phi_{j}^{k})$ and the table average:
\begin{equation}
w^{k+1}=x^{k}-\stepsize  \left[ f_{j}^{\prime}(\phi_{j}^{k+1})-f_{j}^{\prime}(\phi_{j}^{k})+\frac{1}{n}\sum^{n}_{i=1}f_{i}^{\prime}(\phi_{i}^{k})\right], \label{eq:main-update}
\end{equation}
\begin{equation}
x^{k+1}=\text{prox}^{h}_{\stepsize}\left(w^{k+1}\right).
\end{equation}
\end{enumerate}
\end{minipage}}

The proximal operator we use above is defined as
\begin{equation}
\text{prox}^{h}_{\stepsize}\left(y\right) := \argmin_{x \in \mathbb{R}^d} \,\left\{ h(x) + \frac{1}{2\stepsize} \Vert x - y \Vert^2 \right\}.
\end{equation}
In the strongly convex case, when a step size of $\stepsize=1/(2(\mu n+L))$ is chosen, we have the following convergence rate in the composite and hence also the non-composite case:
\begingroup\makeatletter\def\f@size{9}\check@mathfonts
\[
\mathbb{E}\left\Vert x^{k}-x^{*}\right\Vert ^{2}\leq\left(1-\frac{\mu}{2(\mu n+L)}\right)^{k}\left[\left\Vert x^{0}-x^{*}\right\Vert ^{2}
+\frac{n}{\mu n + L} \left[f(x^{0}) -\left\langle f^{\prime}(x^{*}),x^{0}-x^{*}\right\rangle  - f(x^{*})\right]\right].
 \]
 \endgroup
We prove this result in Section \ref{sec:theory}. The requirement of strong convexity can be relaxed from needing to hold for each $f_i$ to just holding on average, but at the expense of a worse geometric rate ($1-\frac{\mu}{6(\mu n+L)})$, requiring a step size of $\stepsize=1/(3(\mu n+L))$.

In the non-strongly convex case, we have established the convergence rate in terms of the average iterate, excluding step 0:  $\bar{x}^{k}=\frac{1}{k}\sum_{t=1}^{k}x^{t}$. Using a step size of $\stepsize=1/(3L)$ we have
\[
\mathbb{E}\left[F(\bar{x}^{k})\right]-F(x^{*})\leq\frac{4n}{k}\left[\frac{2L}{n}\left\Vert x^{0}-x^{*}\right\Vert ^{2}
+f(x^{0}) -\left\langle f^{\prime}(x^{*}),x^{0}-x^{*}\right\rangle  -f(x^{*})\right].
\]
This result is proved in the supplementary material. Importantly, when this   step size $\stepsize=1/(3L)$ is used, our algorithm \emph{automatically adapts} to the level of strong convexity $\mu > 0$ naturally present, giving a convergence rate of (see the comment at the end of the proof of Theorem~\ref{thm:strong-convex-thm}):
\begingroup\makeatletter\def\f@size{9}\check@mathfonts
\[
\mathbb{E}\left\Vert x^{k}-x^{*}\right\Vert ^{2}\leq
\left( 1-\min\left\{ \frac{1}{4n}\,,\,\frac{\mu}{3L}\right\} \right)^{k}
\left[\left\Vert x^{0}-x^{*}\right\Vert ^{2}
+\frac{2n}{3L} \left[f(x^{0}) -\left\langle f^{\prime}(x^{*}),x^{0}-x^{*}\right\rangle  -f(x^{*})\right]\right].
 \]
 \endgroup
Although any incremental gradient method can be applied to non-strongly convex problems via the addition of a small quadratic regularisation, the amount of regularisation is an additional tunable parameter which our method avoids.
 
\vspace{-2mm}
\section{Related Work}
\label{sec:related-work}
\vspace{-2mm}
We explore the relationship between SAGA and the other fast incremental gradient methods in this section. By using SAGA as a midpoint, we are able to provide a more unified view than is available in the existing literature. A brief summary of the properties of each method considered in this section is given in Figure~\ref{fig:comparison-table}. The method from~\citep{jota}, which handles the non-composite setting, is not listed as its rate is of the slow type and can be up to $n$ times smaller than the one for SAGA or SVRG~\citep{svrg}.

\begin{figure}[t]
\center
\small
\begin{tabular*}{\textwidth}{@{\extracolsep{\fill}} |cccccc|}
\hline 
 & SAGA & SAG & SDCA & SVRG & FINITO \tabularnewline
Strongly Convex (SC) & \cmark & \cmark & \cmark & \cmark & \cmark \tabularnewline
Convex, Non-SC* & \cmark & \cmark & \xmark & ? & ? \tabularnewline
Prox Reg. & \cmark & ? & \cmark \cite{sdca-admm} & \cmark & \xmark \tabularnewline 
Non-smooth & \xmark & \xmark & \cmark & \xmark & \xmark \tabularnewline
Low Storage Cost & \xmark & \xmark & \xmark & \cmark & \xmark \tabularnewline
Simple(-ish) Proof & \cmark & \xmark & \cmark & \cmark & \cmark \tabularnewline 
Adaptive to SC & \cmark & \cmark & \xmark & ? & ? \tabularnewline
\hline
\end{tabular*}
\vskip -0.1in
\caption{\small \label{fig:comparison-table} Basic summary of method properties. Question marks denote unproven, but not experimentally ruled out cases. (*) Note that any method can be applied to non-strongly convex problems by adding a small amount of L2 regularisation, this row describes methods that do not require this trick. \vspace{-3mm}}
\end{figure}

%
%

\vspace{-2mm}
\subsection*{SAGA: midpoint between SAG and SVRG/S2GD}
\label{subsec:sag}

In~\citep{svrg}, the authors make the observation that the variance of the standard stochastic gradient (SGD) update direction can only go to zero if decreasing step sizes are used, thus preventing a linear convergence rate unlike for batch gradient descent. They thus propose to use a variance reduction approach (see~\citep{variance-reduction} and references therein for example) on the SGD update in order to be able to use constant step sizes and get a linear convergence rate. We present the updates of their method called SVRG (Stochastic Variance Reduced Gradient) in~\eqref{eq-SVRG} below, comparing it with the non-composite form of SAGA rewritten in~\eqref{eq:w-update}. They also mention that SAG (Stochastic Average Gradient)~\citep{SAG} can be interpreted as reducing the variance, though they do not provide the specifics. Here, we make this connection clearer and relate it to SAGA. 

We first review a slightly more generalized version of the variance reduction approach (we allow the updates to be biased). Suppose that we want to use Monte Carlo samples to estimate $\E X$ and that we can compute efficiently $\E Y$ for another random variable $Y$ that is highly correlated with $X$. One variance reduction approach is to use the following estimator $\theta_\alpha$ as an approximation to $\E X$: $\theta_\alpha := \alpha(X-Y) + \E Y$, for a step size $\alpha \in [0,1]$. We have that $\E \theta_\alpha$ is a convex combination of $\E X$ and $\E Y$: $\E \theta_\alpha = \alpha \E X + (1-\alpha) \E Y$. The standard variance reduction approach uses $\alpha=1$ and the estimate is unbiased $\E \theta_1 = \E X$. The variance of $\theta_\alpha$ is: $\var(\theta_\alpha) = \alpha^2 [\var(X) + \var(Y) - 2 \cov(X,Y)]$, and so if $\cov(X,Y)$ is big enough, the variance of $\theta_\alpha$ is reduced compared to $X$, giving the method its name. By varying $\alpha$ from 0 to 1, we increase the variance of $\theta_\alpha$ towards its maximum value (which usually is still smaller than the one for $X$) while decreasing its bias towards zero. 

Both SAGA and SAG can be derived from such a variance reduction viewpoint: here $X$ is the SGD direction sample $f_{j}^{\prime}(x^k)$, whereas $Y$ is a past stored gradient $f_{j}^{\prime}(\phi_j^k)$. SAG is obtained by using $\alpha=1/n$ (update rewritten in our notation in~\eqref{SAG}), whereas SAGA is the unbiased version with $\alpha=1$ (see~\eqref{eq:w-update} below). For the same $\phi$'s, the variance of the SAG update is $1/n^2$ times the one of SAGA, but at the expense of having a non-zero bias. This non-zero bias might explain the complexity of the convergence proof of SAG and why the theory has not yet been extended to proximal operators. By using an unbiased update in SAGA, we are able to obtain a simple and tight theory, with better constants than SAG, as well as theoretical rates for the use of proximal operators.

\vspace{-5mm}
\begin{align}
\textrm{(SAG)} \qquad x^{k+1} &=x^{k}-\stepsize  \left[ \frac{f_{j}^{\prime}(x^k)-f_{j}^{\prime}(\phi_{j}^{k})}{n}+\frac{1}{n}\sum^{n}_{i=1}f_{i}^{\prime}(\phi_{i}^{k})\right], \label{SAG} \\
\textrm{(SAGA)} \qquad x^{k+1} &=x^{k}-\stepsize  \left[ f_{j}^{\prime}(x^k)-f_{j}^{\prime}(\phi_{j}^{k})+\frac{1}{n}\sum^{n}_{i=1}f_{i}^{\prime}(\phi_{i}^{k})\right], \label{eq:w-update} \\
\textrm{(SVRG)} \qquad x^{k+1} &=x^{k}-\stepsize  \left[ f_{j}^{\prime}(x^{k})-f_{j}^{\prime}(\tilde{x})+\frac{1}{n}\sum^{n}_{i=1}f_{i}^{\prime}(\tilde{x})\right]. \label{eq-SVRG}
\end{align}

The SVRG update~\eqref{eq-SVRG} is obtained by using $Y=f_{j}^{\prime}(\tilde{x})$ with $\alpha=1$ (and is thus unbiased -- we note that SAG is the only method that we present in the related work that has a biased update direction).
The vector $\tilde{x}$ is not updated every step, but rather the loop over $k$ appears inside an outer loop, where $\tilde{x}$ is updated at the start of each outer iteration. Essentially SAGA is at the midpoint between SVRG and SAG; it updates the $\phi_{j}$ value each time index $j$ is picked, whereas SVRG updates all of $\phi$'s as a batch. 
The S2GD method \citep{semi} has the same update as SVRG, just differing in how the number of inner loop iterations is chosen. We use SVRG henceforth to refer to both methods.

SVRG makes a trade-off between time and space. For the equivalent practical convergence rate it makes 2x-3x more gradient evaluations, but in doing so it does not need to store a table of gradients, but a single average gradient. The usage of SAG vs. SVRG is problem dependent. For example for linear predictors where gradients can be stored as a reduced vector of dimension $p-1$ for $p$ classes, SAGA  is preferred over SVRG both theoretically and in practice. For neural networks, where no theory is available for either method, the storage of gradients is generally more expensive than the additional backpropagations, but this is computer architecture dependent. 

SVRG also has an additional parameter besides step size that needs to be set, namely the number of iterations per inner loop ($m$). This parameter can be set via the theory, or conservatively as $m=n$, however doing so does not give anywhere near the best practical performance. Having to tune one parameter instead of two is a practical advantage for SAGA.

\subsection*{Finito/MISO$\mu$ }
\label{subsec:finito}
To make the relationship with other prior methods more apparent, we can rewrite the SAGA algorithm (in the non-composite case) in term of an additional intermediate quantity $u^{k}$, with $u^{0} := x^{0}+ \stepsize \sum_{i=1}^{n}f_{i}^{\prime}(x^{0})$, in addition to the usual $x^{k}$ iterate as described previously: 

\noindent\framebox{\begin{minipage}[t]{0.98\textwidth}
\textbf{SAGA: Equivalent reformulation for non-composite case:}
Given the value of $u^{k}$ and of each $f_{i}^{\prime}(\phi_{i}^{k})$
at the end of iteration $k$, the updates for iteration $k+1$, is
as follows:
\begin{enumerate}
\item Calculate $x^k$:
\vspace{-4mm}
\begin{equation}
x^{k}=u^{k}-\stepsize \sum_{i=1}^{n}f_{i}^{\prime}(\phi_{i}^{k}). \label{eq:alt-form-update}
\end{equation}

\item Update $u$ with $u^{k+1}=u^{k}+\frac{1}{n}(x^{k}-u^{k})$.
\item Pick a $j$ uniformly at random.
\item Take $\phi_{j}^{k+1}=x^{k}$, and store $f_{j}^{\prime}(\phi_{j}^{k+1})$ in the table replacing $f_{j}^{\prime}(\phi_{j}^{k})$. All other entries in the table remain unchanged. The quantity $\phi_{j}^{k+1}$ is not explicitly stored.
\end{enumerate}
\end{minipage}}

Eliminating~$u^{k}$ recovers the update~\eqref{eq:w-update} for $x^{k}$. We now describe how the Finito \citep{finito} and MISO$\mu$ \citep{miso2} methods are closely related to SAGA. Both Finito and MISO$\mu$  use updates of the following form, for a step length $\stepsize$:
\vspace{-3mm}
\begin{equation}
	x^{k+1} = \frac{1}{n}\sum_{i} \phi^{k}_{i}  - \stepsize \sum^{n}_{i=1}f_{i}^{\prime}(\phi_{i}^{k}). \label{eq:finito}
\end{equation}
The step size used is of the order of $1/\mu n$. To simplify the discussion of this algorithm we will introduce the notation $\bar{\phi}=\frac{1}{n}\sum_{i} \phi^{k}_{i}$.

SAGA can be interpreted as Finito, but with the quantity $\bar{\phi}$ replaced with $u$, which is updated in the same way as $\bar{\phi}$, but \emph{in expectation}. To see this, consider how $\bar{\phi}$ changes in expectation:
\[
	\mathbb{E} \left[ \bar{\phi}^{k+1} \right]   =  \mathbb{E} \left[ \bar{\phi}^k + \frac{1}{n} \left( x^{k}  - \phi^{k}_{j}   \right) \right]
	 = \bar{\phi}^k + \frac{1}{n} \left( x^{k}  - \bar{\phi}^k   \right). 
\]
The update is identical in expectation to the update for $u$, $u^{k+1}=u^{k}+\frac{1}{n}(x^{k}-u^{k})$. 
There are three advantages of SAGA over Finito/MISO$\mu$. SAGA does not require strong convexity to work, it has support for proximal operators, and it does not require storing the $\phi_i$ values. MISO has proven support for proximal operators only in the case where impractically small step sizes are used \citep{miso2}.
The big advantage of Finito/MISO$\mu$ is that when using a per-pass re-permuted access ordering, empirical speed-ups of up-to a factor of 2x has been observed. This access order can also be used with the other methods discussed, but with smaller empirical speed-ups. Finito/MISO$\mu$ is particularly useful when $f_i$ is computationally expensive to compute compared to the extra storage costs required over the other methods.

\subsection*{SDCA}
\label{subsec:sdca}
The Stochastic Dual Coordinate Descent (SDCA) \citep{SDCA} method on the surface appears quite different from the other methods considered. It works with the convex conjugates of the $f_{i}$ functions. However, in this section we show a novel transformation of SDCA into an equivalent method that only works with primal quantities, and is closely related to the MISO$\mu$ method.

Consider the following algorithm:
\begin{framed}
\textbf{SDCA algorithm in the primal}

Step $k+1$:
\begin{enumerate}
  \setlength{\itemsep}{1pt}
\item Pick an index $j$ uniformly at random.
\item Compute $\phi_{j}^{k+1}=\text{prox}_{\stepsize}^{f_{j}}(z)$,
where $\stepsize=\frac{1}{\mu n}$ and
$z=-\stepsize  \sum^{n}_{i\neq j}f_{i}^{\prime}(\phi_{i}^{k})$.
\item Store the gradient $f_{j}^{\prime}(\phi_{j}^{k+1})=\frac{1}{\stepsize}\left(z-\phi_{j}^{k+1}\right)$
in the table at location $j$.
For $i\neq j$, the table entries are unchanged ($f_{i}^{\prime}(\phi_{i}^{k+1})=f_{i}^{\prime}(\phi_{i}^{k})$).
\end{enumerate}
At completion, return $x^{k}=-\stepsize \sum_{i}^{n}f_{i}^{\prime}(\phi_{i}^{k})$
 .
\end{framed}

We claim that this algorithm is equivalent to the version of SDCA where exact block-coordinate maximisation is used on the dual.\footnote{More precisely, to Option I of Prox-SDCA as described in~\cite[Figure 1]{sdca-accel}. We will simply refer to this method as ``SDCA'' in this paper for brevity.} Firstly, note that while SDCA was originally described for one-dimensional outputs (binary classification or regression), it has been expanded to cover the multi-class predictor case \citep{sdca-accel} (called Prox-SDCA there). In this case, the primal objective has a separate strongly convex regulariser, and the functions $f_i$ are restricted to the form $f_{i}(x) :=\psi_{i}(X_{i}^{T}x)$, where $X_{i}$ is a $d \times p$ feature matrix, and $\psi_{i}$ is the loss function that takes a $p$ dimensional input, for $p$ classes. To stay in the same general setting as the other incremental gradient methods, we work directly with the $f_i(x)$ functions rather than the more structured $\psi_{i}(X_{i}^{T}x)$. The dual objective to maximise then becomes 
\[ 
D(\alpha) = \left[
-\frac{\mu}{2} \left\Vert \frac{1}{\mu n} \sum_{i=1}^n \alpha_i \right\Vert^2 - \frac{1}{n} \sum_{i=1}^n f_i^*(-\alpha_i) \right],
\] where $\alpha_i$'s are $d$-dimensional dual variables. Generalising the exact block-coordinate maximisation update that SDCA performs to this form, we get the dual update for block $j$ (with $x^k$ the current primal iterate): 
\begin{equation} \label{eq:sdcaBlock}
\alpha_{j}^{k+1}=\alpha_{j}^{k}+\argmax_{\Delta a_{j} \in \mathbb{R}^d}\left\{ -f_{j}^{*}\left(-\alpha_{j}^{k}-\Delta\alpha_{j}\right)-\frac{\mu n}{2}\left\Vert x^{k}+\frac{1}{\mu n}\Delta\alpha_{j}\right\Vert ^{2} \right\}.
\end{equation}
In the special case where $f_{i}(x) =\psi_{i}(X_{i}^{T}x)$, we can see that~\eqref{eq:sdcaBlock} gives exactly the same update as Option~I of Prox-SDCA in~\cite[Figure 1]{sdca-accel}, which operates instead on the equivalent $p$-dimensional dual variables $\tilde{\alpha}_i$ with the relationship that $\alpha_i = X_i \tilde{\alpha}_i$.\footnote{This is because $f_i^*(\alpha_i) = \inf\limits_{\tilde{\alpha}_i \textrm{ s.t. } \alpha_i = X_i \tilde{\alpha}_i} \psi_i^*(\tilde{\alpha}_i)$.} As noted by Shalev-Shwartz \& Zhang  \citep{sdca-accel}, the update~\eqref{eq:sdcaBlock} is actually an instance of the proximal operator of the convex conjugate of $f_j$. Our primal formulation exploits this fact by using a relation between the proximal operator of a function and its convex conjugate known as the Moreau decomposition:
\[
\text{prox}^{f^{*}}(v)=v-\text{prox}^{f}(v).
\]
This decomposition allows us to compute the proximal operator of the conjugate via the primal proximal operator. As this is the only use in the basic SDCA method of the conjugate function, applying this decomposition allows us to completely eliminate the ``dual" aspect of the algorithm, yielding the above primal form of SDCA. The dual variables are related to the primal representatives $\phi_i$'s through $\alpha_i = -f^\prime_i(\phi_i)$. The KKT conditions ensure that if the $\alpha_i$ values are dual optimal then $x^k =  \stepsize \sum_i \alpha_i$ as defined above is primal optimal. The same trick is commonly used to interpret Dijkstra's set intersection as a primal algorithm instead of a dual block coordinate descent algorithm~\citep{prox-dk}.

The primal form of SDCA differs from the other incremental gradient methods described in this section in that it assumes strong convexity is induced by a separate strongly convex regulariser, rather than each $f_i$ being strongly convex. In fact, SDCA can be modified to work without a separate regulariser, giving a method that is at the midpoint between Finito and SDCA. We detail such a method in the supplementary material.

\subsection*{SDCA variants}
\label{subset:other-sdca}
The SDCA theory has been expanded to cover a number of other methods of performing the coordinate step \citep{sdca-accel}.  These variants replace the proximal operation in our primal interpretation in the previous section with an update where $\phi_{j}^{k+1}$ is chosen so that:
$
 	 f_{j}^{\prime} (\phi_{j}^{k+1})  = (1-\beta) f_{j}^{\prime} (\phi_{j}^{k}) + \beta f_{j}^{\prime} (x^{k})
$, 
where $x^{k}=-\frac{1}{\mu n}\sum_{i}f_{i}^{\prime}(\phi_{i}^{k})$. The variants differ in how $\beta \in [0,1]$  
is chosen. Note that $\phi_{j}^{k+1}$ does not actually have to be explicitly known, just the gradient $f_{j}^{\prime} (\phi_{j}^{k+1})$, which is the result of the above interpolation. Variant 5 by Shalev-Shwartz \& Zhang \citep{sdca-accel} does not require operations on the conjugate function, it simply uses $\beta = \frac{\mu n}{L+\mu n}$. The most practical variant performs a line search involving the convex conjugate to determine $\beta$. As far as we are aware, there is no simple primal equivalent of this line search. So in cases where we can not compute the proximal operator from the standard SDCA variant, we can either introduce a tuneable parameter into the algorithm ($\beta$), or use a dual line search, which requires an efficient way to evaluate the convex conjugates of each~$f_i$.

\section{Implementation}
\label{sec:impl}
\vspace{-2mm}
We briefly discuss some implementation concerns:
\begin{itemize}
\item For many problems each derivative $f^{\prime}_i$ is just a simple weighting of the $i$th data vector. Logistic regression and least squares have this property. In such cases, instead of storing the full derivative $f^{\prime}_i$ for each $i$, we need only to store the weighting constants. This reduces the storage requirements to be the same as the SDCA method in practice. A similar trick can be applied to multi-class classifiers with $p$ classes by storing $p-1$ values for each $i$.
\item Our algorithm assumes that initial gradients are known for each $f_i$ at the starting point $x^{0}$. Instead, a heuristic may be used where during the first pass, data-points are introduced one-by-one, in a non-randomized order, with averages computed in terms of those data-points processed so far. This procedure has been successfully used with SAG \citep{SAG}.
\item The SAGA update as stated is slower than necessary when derivatives are sparse. A just-in-time updating of $u$ or $x$ may be performed just as is suggested for SAG \citep{SAG}, which ensures that only sparse updates are done at each iteration.
\item We give the form of SAGA for the case where each $f_i$ is strongly convex. However in practice we usually have only convex $f_i$, with strong convexity in $f$ induced by the addition of a quadratic regulariser. This quadratic regulariser may be split amongst the $f_i$ functions evenly, to satisfy our assumptions. It is perhaps easier to use a variant of SAGA where the regulariser $\frac{\mu}{2}||x||^2$ is explicit, such as the following modification of Equation (\ref{eq:w-update}):
\[
x^{k+1}=\left(1-\stepsize \mu \right)x^{k}-\stepsize \left[ f_{j}^{\prime}(x^{k})-f_{j}^{\prime}(\phi_{j}^{k})+\frac{1}{n}\sum_{i}f_{i}^{\prime}(\phi_{i}^{k})\right].
 \]
 For sparse implementations instead of scaling $x^k$ at each step, a separate scaling constant $\beta^k$ may be scaled instead, with $\beta^k x^k$ being used in place of $x^k$. This is a standard trick used with stochastic gradient methods.
\end{itemize}
For sparse problems with a quadratic regulariser the just-in-time updating can be a little intricate. In the supplementary material we provide example python code showing a correct implementation that uses each of the above tricks.

\vspace{-2mm}
\section{Theory}
\label{sec:theory}
\vspace{-2mm}
In this section, all expectations
are taken with respect to the choice of $j$ at iteration $k+1$ and
conditioned on $x^{k}$ and each $f_{i}^{\prime}(\phi_{i}^{k})$
unless stated otherwise.

We start with two basic lemmas that just state properties of convex functions, followed by Lemma~\ref{lem:ip-bound}, which is specific to our algorithm. The proofs of each of these lemmas is in the supplementary material.


\begin{lem}
\label{lem:ip-bound}
Let $f(x)=\frac{1}{n}\sum_{i=1}^{n}f_{i}(x)$. Suppose each $f_i$ is $\mu$-strongly convex and has Lipschitz continuous gradients with constant $L$. Then for all $x$ and $x^{*}$:
\begin{align*}
\left\langle f^{\prime}(x),x^{*}-x\right\rangle  \leq {} &
\frac{L-\mu}{L}\left[f(x^{*})-f(x)\right]-\frac{\mu}{2}\left\Vert x^{*}-x\right\Vert ^{2} \\
& -\frac{1}{2Ln}\sum_{i}\left\Vert f_{i}^{\prime}(x^{*})-f_{i}^{\prime}(x)\right\Vert ^{2}-\frac{\mu}{L}\left\langle f^{\prime}(x^{*}),x-x^{*}\right\rangle. 
\end{align*}
\end{lem}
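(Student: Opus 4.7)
\textbf{Proof plan for Lemma \ref{lem:ip-bound}.}
My plan is to reduce to the standard co-coercivity inequality for convex functions with Lipschitz gradients, applied term-by-term. The key observation is that the strong convexity and smoothness assumptions on $f_i$ mean that the shifted function
\[
g_i(x) := f_i(x) - \tfrac{\mu}{2}\|x\|^2
\]
is convex and has $(L-\mu)$-Lipschitz gradient. For such a function, a well-known consequence of smoothness plus convexity (the ``quadratic lower bound'' refinement of the linear lower bound) gives, for any two points $x,x^*$,
\[
g_i(x^*) \geq g_i(x) + \langle g_i^{\prime}(x),\, x^*-x\rangle + \frac{1}{2(L-\mu)}\|g_i^{\prime}(x^*)-g_i^{\prime}(x)\|^2.
\]
I would take this as the single analytic input of the proof; everything else is algebra.

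Next I would average this inequality over $i=1,\dots,n$ to obtain a bound involving $g(x) := \frac{1}{n}\sum_i g_i(x)$, rearrange it as an upper bound on $\langle g^{\prime}(x),x^*-x\rangle$, and then substitute back the definitions. Concretely, I would use $g_i^{\prime}(y)=f_i^{\prime}(y)-\mu y$ and $g_i(y)=f_i(y)-\tfrac{\mu}{2}\|y\|^2$. Two identities will do the heavy lifting: $\|x^*\|^2-\|x\|^2 = 2\langle x^*,x^*-x\rangle - \|x^*-x\|^2$, which cleans up the $g_i(x^*)-g_i(x)$ terms, and the expansion
\[
\|g_i^{\prime}(x^*)-g_i^{\prime}(x)\|^2
= \|f_i^{\prime}(x^*)-f_i^{\prime}(x)\|^2 - 2\mu\langle f_i^{\prime}(x^*)-f_i^{\prime}(x),\,x^*-x\rangle + \mu^2\|x^*-x\|^2,
\]
which after averaging produces a cross term $-\tfrac{\mu}{L-\mu}\langle f^{\prime}(x^*)-f^{\prime}(x),\,x^*-x\rangle$ on the right-hand side.

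The step I expect to be the main obstacle is recognizing what to do with this cross term, since it contains $\langle f^{\prime}(x),x^*-x\rangle$ on the right while our target is a bound on exactly that quantity. The resolution is to move the $\langle f^{\prime}(x),x^*-x\rangle$ piece to the left, producing a coefficient $1+\tfrac{\mu}{L-\mu}=\tfrac{L}{L-\mu}$, and then multiply the entire inequality by $\tfrac{L-\mu}{L}$. This multiplication is exactly what produces the $\tfrac{L-\mu}{L}[f(x^*)-f(x)]$ and $\tfrac{\mu}{L}\langle f^{\prime}(x^*),x-x^*\rangle$ terms appearing in the statement (using $\langle f^{\prime}(x^*),x^*-x\rangle = -\langle f^{\prime}(x^*),x-x^*\rangle$). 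Finally, I would collect the $\|x^*-x\|^2$ coefficients arising from the $\tfrac{\mu}{2}\|x^*-x\|^2$ already present in the bound and from $\tfrac{\mu^2}{2(L-\mu)}\|x^*-x\|^2$ coming out of the expansion: after the multiplication by $\tfrac{L-\mu}{L}$ these combine to exactly $-\tfrac{\mu}{2}\|x^*-x\|^2$, matching the statement and completing the proof.
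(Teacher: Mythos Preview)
Your proposal is correct and is essentially the paper's own argument: the paper also passes to the shifted function $g_i(x)=f_i(x)-\tfrac{\mu}{2}\|x\|^2$, applies the co-coercivity inequality $g_i(x^*)\ge g_i(x)+\langle g_i'(x),x^*-x\rangle+\tfrac{1}{2(L-\mu)}\|g_i'(x^*)-g_i'(x)\|^2$, substitutes back, and averages over $i$. The only cosmetic difference is that the paper first records the substituted per-function inequality as a standalone lemma (its Lemma~\ref{thm:strong-lb}) and then averages, whereas you average and collect terms in one pass; the moving of the $\langle f'(x),x^*-x\rangle$ cross term to the left and the final multiplication by $\tfrac{L-\mu}{L}$ are exactly the same in both.
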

\begin{lem}
\label{lem:grad-diff-phii}We have that for all $\phi_{i}$ and $x^{*}$:
\[
\frac{1}{n}\sum_{i}\left\Vert f_{i}^{\prime}(\phi_{i})-f_{i}^{\prime}(x^{*})\right\Vert ^{2}
\leq 2L\left[\frac{1}{n}\sum_{i}f_{i}(\phi_{i})-f(x^{*})-\frac{1}{n}\sum_{i}\left\langle f_{i}^{\prime}(x^{*}),\phi_{i}-x^{*}\right\rangle \right].
\]
\end{lem}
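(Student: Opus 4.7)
The plan is to derive the inequality by applying a standard smoothness-plus-convexity inequality to each $f_i$ individually and then averaging.

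The key ingredient is the following well-known consequence of $L$-smoothness and convexity of a function $g$: for all $x,y$,
\[
g(x) \geq g(y) + \langle g'(y), x-y\rangle + \frac{1}{2L}\|g'(x) - g'(y)\|^2.
\]
I would derive this by fixing $y$ and considering the auxiliary convex function $h(z) := g(z) - \langle g'(y), z\rangle$, which is still $L$-smooth and attains its minimum at $z = y$ (since $h'(y) = 0$). Applying the standard smoothness upper bound $h(z) \leq h(x) + \langle h'(x), z-x\rangle + \frac{L}{2}\|z-x\|^2$ and minimizing the right-hand side over $z$ (minimum attained at $z = x - \frac{1}{L}h'(x)$) gives $h(y) \leq h(x) - \frac{1}{2L}\|h'(x)\|^2$. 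Substituting the definition of $h$ and noting $h'(x) = g'(x) - g'(y)$ yields the displayed inequality.

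Next, I would apply this inequality to each $f_i$, taking the role of $x$ to be $\phi_i$ and the role of $y$ to be $x^*$. This gives, for every $i$,
\[
\frac{1}{2L}\|f_i'(\phi_i) - f_i'(x^*)\|^2 \leq f_i(\phi_i) - f_i(x^*) - \langle f_i'(x^*), \phi_i - x^*\rangle.
\]
Averaging over $i \in \{1,\dots,n\}$, using $f(x^*) = \frac{1}{n}\sum_i f_i(x^*)$, and multiplying both sides by $2L$ gives exactly the claimed inequality.

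The main obstacle is essentially just recalling (or rederiving) the smoothness-plus-convexity inequality above; nothing in the lemma uses strong convexity or any structure specific to the SAGA algorithm. The statement does not even require the $\phi_i$ to be related to one another or to iterates of the algorithm, so the proof reduces cleanly to a per-$i$ application of a textbook inequality.
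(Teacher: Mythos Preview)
Your proof is correct and matches the paper's own argument essentially verbatim: the paper simply cites the standard inequality $g(y)\geq g(x)+\langle g'(x),y-x\rangle+\frac{1}{2L}\|g'(x)-g'(y)\|^2$, applies it to each $f_i$ with $y=\phi_i$, $x=x^*$, and sums. Your additional derivation of that inequality via the auxiliary function $h$ is standard and correct, and more detailed than what the paper provides.
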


\begin{lem}
\label{lem:wchange} It holds that for any $\phi_{i}^{k}$, $x^{*}$, $x^k$
and $\beta>0$, with $w^{k+1}$ as defined in Equation \ref{eq:main-update}:
\begingroup\makeatletter\def\f@size{9}\check@mathfonts
\begin{align*}
\mathbb{E}\left\Vert w^{k+1}-x^k-\stepsize f^{\prime}(x^{*})\right\Vert ^{2} 
\leq {} & \stepsize^2 (1+\beta^{-1}) \mathbb{E}\left\Vert f_{j}^{\prime}(\phi_{j}^{k})-f_{j}^{\prime}(x^{*})\right\Vert ^{2}
+ \stepsize^2 (1+\beta) \mathbb{E}\left\Vert f_{j}^{\prime}(x^k)-f_{j}^{\prime}(x^{*})\right\Vert ^{2} \\
&- \stepsize^2 \beta \left\Vert f^{\prime}(x^k)-f^{\prime}(x^{*}) \right\Vert ^{2}.
\end{align*}
\endgroup
\end{lem}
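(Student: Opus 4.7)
The plan is to unpack the SAGA update and reduce the claim to a variance bound on the update direction. Substituting the SAGA formula gives $w^{k+1}-x^k = -\gamma D$, where $D := f_j'(x^k) - f_j'(\phi_j^k) + \frac{1}{n}\sum_i f_i'(\phi_i^k)$ is the SAGA direction, and a short calculation shows $\mathbb{E}[D] = f'(x^k)$ (this is the unbiasedness that distinguishes SAGA from SAG). So the target is a bound on $\gamma^2 \, \mathbb{E}\|D - f'(x^*)\|^2$ (the quantity inside the lemma's norm, up to the sign convention needed to make it vanish at $x^k = x^* = \phi_i^k$).

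Next I would use the variance identity $\mathbb{E}\|X\|^2 = \|\mathbb{E} X\|^2 + \mathbb{E}\|X-\mathbb{E} X\|^2$ applied to $X = D - f'(x^*)$. The mean term contributes exactly $\|f'(x^k)-f'(x^*)\|^2$, and a short algebraic manipulation shows that the centered part equals $\hat a - \hat b$, where $\hat a$ and $\hat b$ are the centered versions of $a := f_j'(x^k)-f_j'(x^*)$ and $b := f_j'(\phi_j^k)-f_j'(x^*)$ (the $\frac{1}{n}\sum_i f_i'(\phi_i^k)$ term is precisely $-\mathbb{E} b + f'(x^*)$, and it cancels the non-centered contribution of $b$).

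On this zero-mean part I would apply Young's inequality $\|\hat a - \hat b\|^2 \le (1+\beta)\|\hat a\|^2 + (1+\beta^{-1})\|\hat b\|^2$. For the $\hat b$ term I would simply use $\mathbb{E}\|\hat b\|^2 \le \mathbb{E}\|b\|^2$, which matches the coefficient in front of $\mathbb{E}\|f_j'(\phi_j^k)-f_j'(x^*)\|^2$ in the statement. For the $\hat a$ term I must instead keep the exact equality $\mathbb{E}\|\hat a\|^2 = \mathbb{E}\|a\|^2 - \|f'(x^k)-f'(x^*)\|^2$: the Young factor $(1+\beta)$ converts this into $(1+\beta)\mathbb{E}\|a\|^2 - (1+\beta)\|f'(x^k)-f'(x^*)\|^2$. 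Combining with the $+\|f'(x^k)-f'(x^*)\|^2$ from the outer variance identity produces a net $-\beta\|f'(x^k)-f'(x^*)\|^2$, exactly the negative term in the lemma.

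The main subtlety — and really the only nontrivial step — is not the Young inequality but the decision to treat the two centered terms asymmetrically: discarding $\|\mathbb{E} \hat b\|^2$ while retaining the exact identity for $\mathbb{E}\|\hat a\|^2$. That asymmetry is what produces the crucial negative term $-\gamma^2\beta\|f'(x^k)-f'(x^*)\|^2$, which in the main theorem is what allows the variance of the SAGA update to be controlled by progress in the iterate. Everything else is routine: expansion of the update, bookkeeping of the $f'(x^*)$ correction coming from the composite setting, and choice of $\beta$ left free for the subsequent theorem.
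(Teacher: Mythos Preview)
Your proposal is correct and essentially identical to the paper's proof: both apply the variance decomposition $\mathbb{E}\|X\|^2=\|\mathbb{E} X\|^2+\mathbb{E}\|X-\mathbb{E} X\|^2$ to the update direction, split the centered part via the weighted Young inequality $\|u+v\|^2\le(1+\beta)\|u\|^2+(1+\beta^{-1})\|v\|^2$, and then apply the variance decomposition once more on each piece --- dropping the mean-squared term for the $\phi_j^k$ piece but keeping it for the $x^k$ piece to generate the $-\beta\|f'(x^k)-f'(x^*)\|^2$ term. Your remark about the sign convention is also on point: the lemma statement's $-\gamma f'(x^*)$ is a typo for $+\gamma f'(x^*)$, as the appendix proof and the use in Theorem~1 make clear.
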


\begin{thm}
\label{thm:strong-convex-thm} With $x^*$ the optimal solution, define the Lyapunov function $T$ as: 
\[
T^k := T(x^k, \{\phi_i^k\}_{i=1}^n) :=\frac{1}{n}\sum_{i}f_{i}(\phi^k_{i})-f(x^{*})-\frac{1}{n}\sum_{i}\left\langle f_{i}^{\prime}(x^{*}),\phi^k_{i}-x^{*}\right\rangle +c\left\Vert x^k-x^{*}\right\Vert ^{2}.
\]
 Then with $\stepsize=\frac{1}{2(\mu n+L)}$, $c=\frac{1}{2\stepsize(1-\stepsize\mu)n}$, and $\kappa=\frac{1}{\stepsize\mu}$, we have the following expected change in the Lyapunov function between steps of the SAGA algorithm (conditional on $T^k$):
\[
\mathbb{E}[T^{k+1}]\leq(1-\frac{1}{\kappa})T^{k}.
\]
\end{thm}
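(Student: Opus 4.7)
The plan is to decompose the Lyapunov function as $T^k = G^k + c\Vert x^k - x^*\Vert^2$, where $G^k := \frac{1}{n}\sum_i f_i(\phi_i^k) - f(x^*) - \frac{1}{n}\sum_i\langle f_i'(x^*), \phi_i^k - x^*\rangle \geq 0$, bound the conditional expectation of each piece separately, and then assemble. I will write $D := f(x^k) - f(x^*) - \langle f'(x^*), x^k - x^*\rangle \geq 0$ for the primal Bregman-type gap at $x^k$, which will appear naturally on both sides of the calculation.

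For $G^k$, since the SAGA update replaces exactly one randomly chosen $\phi_j$ by $x^k$ (with probability $1/n$) and leaves the remaining $\phi_i$ untouched, a single-line expectation yields $\mathbb{E}[G^{k+1}] = \frac{n-1}{n} G^k + \frac{1}{n} D$; none of the structural lemmas are needed at this step.

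For the distance piece, the optimality condition $-f'(x^*) \in \partial h(x^*)$ gives $x^* = \text{prox}^h_\stepsize(x^* - \stepsize f'(x^*))$, so non-expansiveness of $\text{prox}^h_\stepsize$ yields $\Vert x^{k+1} - x^*\Vert^2 \leq \Vert w^{k+1} - x^* + \stepsize f'(x^*)\Vert^2$. Decomposing $w^{k+1} - x^* + \stepsize f'(x^*) = (x^k - x^*) + (w^{k+1} - x^k + \stepsize f'(x^*))$ and using unbiasedness $\mathbb{E}[w^{k+1} - x^k] = -\stepsize f'(x^k)$, the expectation expands as
\begin{align*}
\mathbb{E}\Vert x^{k+1}-x^*\Vert^2 &\leq \Vert x^k-x^*\Vert^2 - 2\stepsize \langle x^k - x^*, f'(x^k)-f'(x^*)\rangle \\
&\quad + \mathbb{E}\Vert w^{k+1} - x^k + \stepsize f'(x^*)\Vert^2.
\end{align*}
Lemma~\ref{lem:wchange} controls the last term. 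Applying Lemma~\ref{lem:ip-bound} to the cross term exposes $-\stepsize\mu\Vert x^k - x^*\Vert^2$ (which, multiplied by $c$, is exactly the $(1-1/\kappa)$-contraction of the distance piece), together with the further negative contributions $-\tfrac{2\stepsize(L-\mu)}{L}D$ and $-\tfrac{\stepsize}{Ln}\sum_i \Vert f_i'(x^k)-f_i'(x^*)\Vert^2$; Lemma~\ref{lem:grad-diff-phii} then bounds the $\mathbb{E}\Vert f_j'(\phi_j^k)-f_j'(x^*)\Vert^2$ piece inside Lemma~\ref{lem:wchange} by $2LG^k$.

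Assembling the two sub-bounds produces an inequality of the form
\[
\mathbb{E}[T^{k+1}] - (1-1/\kappa)T^k \leq \alpha_G G^k + \alpha_D D + \alpha_H \cdot \tfrac{1}{n}\sum_i \Vert f_i'(x^k)-f_i'(x^*)\Vert^2 - c\stepsize^2\beta\Vert f'(x^k)-f'(x^*)\Vert^2,
\]
and the whole proof comes down to certifying $\alpha_G, \alpha_D, \alpha_H \leq 0$ for a good choice of the free parameter $\beta$ from Lemma~\ref{lem:wchange}. Explicitly one finds $\alpha_D = \frac{1}{n} - \frac{2c\stepsize(L-\mu)}{L}$, $\alpha_G = -\frac{1}{n} + \stepsize\mu + 2cL\stepsize^2(1+\beta^{-1})$, and $\alpha_H = c\stepsize^2(1+\beta) - \frac{c\stepsize}{L}$. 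This coefficient bookkeeping is the main obstacle; the stated choices $\stepsize=1/(2(\mu n + L))$ and $c = 1/(2\stepsize(1-\stepsize\mu)n)$ are engineered precisely so that all three inequalities close simultaneously for $\beta$ of order $2\mu n/L$, completing the argument.
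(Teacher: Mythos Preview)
Your decomposition, use of the prox non-expansiveness, and invocation of Lemmas~\ref{lem:ip-bound}--\ref{lem:wchange} all match the paper's argument, and your displayed coefficients $\alpha_G$ and $\alpha_H$ agree with what the paper obtains. The gap is in the last step: your $\alpha_D = \tfrac{1}{n} - \tfrac{2c\stepsize(L-\mu)}{L}$ is \emph{not} non-positive for the stated constants. Plugging in $c=\tfrac{1}{2\stepsize(1-\stepsize\mu)n}$ gives
\[
\alpha_D \;=\; \frac{1}{n}\cdot\frac{\mu(1-\stepsize L)}{(1-\stepsize\mu)L},
\]
which is strictly positive whenever $\mu>0$ and $\stepsize L<1$ (both hold here). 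So ``certifying $\alpha_D\le 0$'' fails, and the argument as written does not close.

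The missing ingredient is precisely the term you discard. The paper keeps the residual $-c\stepsize^{2}\beta\left\Vert f'(x^k)-f'(x^*)\right\Vert^{2}$ and converts it via the strong-convexity inequality $\left\Vert f'(x^k)-f'(x^*)\right\Vert^{2}\ge 2\mu D$ (Nesterov, Thm.~2.1.10) into an extra $-2c\stepsize^{2}\mu\beta\, D$. The $D$-coefficient then becomes
\[
\frac{1}{n}-\frac{2c\stepsize(L-\mu)}{L}-2c\stepsize^{2}\mu\beta
\;=\;\frac{\mu}{n(1-\stepsize\mu)L}\bigl[1-\stepsize L(1+\beta)\bigr],
\]
which vanishes exactly for $\beta=\tfrac{2\mu n+L}{L}$ (this is the paper's choice; your ``$\beta$ of order $2\mu n/L$'' is the right scale). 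With that single extra step your outline is complete and identical to the paper's proof.
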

\begin{proof}
\vspace{-2mm}
The first three terms in $T^{k+1}$ are straight-forward to simplify:
\begin{align*}
\mathbb{E}\left[\frac{1}{n}\sum_{i}f_{i}(\phi_{i}^{k+1})\right] & = \frac{1}{n}f(x^k)+\left(1-\frac{1}{n}\right)\frac{1}{n}\sum_{i}f_{i}(\phi_{i}^{k}). \\
\mathbb{E}\left[-\frac{1}{n}\sum_{i}\left\langle f_{i}^{\prime}(x^{*}),\phi_{i}^{k+1}-x^{*}\right\rangle \right] & = 
 -\frac{1}{n}\left\langle f^{\prime}(x^{*}),x^k-x^{*}\right\rangle \!- \! \left(1 \! - \! \frac{1}{n}\right)\frac{1}{n}\sum_{i}\left\langle f_{i}^{\prime}(x^{*}),\phi_{i}^{k}-x^{*}\right\rangle .
\end{align*}
For the change in the last term of $T^{k+1}$, we apply the non-expansiveness of the proximal operator\footnote{Note that the first equality below is the only place in the proof where we use the fact that $x^*$ is an optimality point.}:
\begin{align*}
c\left\Vert x^{k+1}-x^{*}\right\Vert ^{2} & = c\left\Vert \text{prox}_{\stepsize}(w^{k+1})-\text{prox}_{\stepsize}(x^{*}-\stepsize f^{\prime}(x^{*}))\right\Vert ^{2}\\
 & \leq c\left\Vert w^{k+1}-x^{*}+\stepsize f^{\prime}(x^{*})\right\Vert ^{2}.
\end{align*}
We expand the quadratic and apply $\mathbb{E}[w^{k+1}]=x^{k}-\stepsize f^{\prime}(x^{k})$
to simplify the inner product term:
\begin{align*}
  & \!\!\!\!\! c\mathbb{E}\left\Vert w^{k+1}-x^{*}+\stepsize f^{\prime}(x^{*})\right\Vert ^{2} \, = \,  c\mathbb{E}\left\Vert x^k-x^{*}+w^{k+1}-x^k+\stepsize f^{\prime}(x^{*})\right\Vert ^{2}\\
 = {} &  c\left\Vert x^k-x^{*}\right\Vert ^{2}+2c\mathbb{E}\left[\left\langle w^{k+1}-x^k+\stepsize f^{\prime}(x^{*}),x^k-x^{*}\right\rangle \right]+c\mathbb{E}\left\Vert w^{k+1}-x^k+\stepsize f^{\prime}(x^{*})\right\Vert ^{2}\\
 = {} &  c\left\Vert x^k-x^{*}\right\Vert ^{2}-2c\stepsize\left\langle f^{\prime}(x^k)-f^{\prime}(x^{*}),x^k-x^{*}\right\rangle +c\mathbb{E}\left\Vert w^{k+1}-x^k+\stepsize f^{\prime}(x^{*})\right\Vert ^{2}\\
 \leq {} &  c\left\Vert x^k-x^{*}\right\Vert ^{2}-2c\stepsize\left\langle f^{\prime}(x^k),x^k-x^{*}\right\rangle +2c \stepsize \left\langle f^{\prime}(x^{*}),x^k-x^{*}\right\rangle 
 - c \stepsize^{2} \beta \left\Vert f^{\prime}(x^k)-f^{\prime}(x^{*}) \right\Vert ^{2}\\
 &   +\left(1+\beta^{-1}\right)c \stepsize^{2}\mathbb{E}\left\Vert f_{j}^{\prime}(\phi_{j}^{k})-f_{j}^{\prime}(x^{*})\right\Vert ^{2}+\left(1+\beta\right)c \stepsize^{2}\mathbb{E}\left\Vert f_{j}^{\prime}(x^k)-f_{j}^{\prime}(x^{*})\right\Vert ^{2}.\quad\text{(Lemma \ref{lem:wchange})}
\end{align*}
The value of $\beta$ shall be fixed later. Now we apply Lemma \ref{lem:ip-bound} to bound $-2c\stepsize\left\langle f^{\prime}(x^k),x^k-x^{*}\right\rangle $ and Lemma \ref{lem:grad-diff-phii} to bound $\mathbb{E}\left\Vert f_{j}^{\prime}(\phi_{j}^{k})-f_{j}^{\prime}(x^{*})\right\Vert ^{2}$:
\begin{align*}
c\mathbb{E}\left\Vert x^{k+1}-x^{*}\right\Vert ^{2} &\leq \left(c-c\stepsize\mu \right)\left\Vert x^k-x^{*}\right\Vert ^{2}+\left((1+\beta)c\stepsize^{2}-\frac{c\stepsize }{L}\right)\mathbb{E}\left\Vert f_{j}^{\prime}(x^k)-f_{j}^{\prime}(x^{*})\right\Vert ^{2}\\
  & -\frac{2c\stepsize(L-\mu)}{L}\left[f(x^k)-f(x^{*})-\left\langle f^{\prime}(x^{*}),x^k-x^{*}\right\rangle \right]
      - c\stepsize^{2}\beta\left\Vert f^{\prime}(x^k)-f^{\prime}(x^{*}) \right\Vert ^{2} \\
 &   +2\left(1+\beta^{-1}\right)c\stepsize^2 L\left[\frac{1}{n}\sum_{i}f_{i}(\phi_{i}^{k})-f(x^{*})-\frac{1}{n}\sum_{i}\left\langle f_{i}^{\prime}(x^{*}),\phi_{i}^{k}-x^{*}\right\rangle \right].
\end{align*}
We can now combine the bounds that we have derived for each term in $T$,
and pull out a fraction $\frac{1}{\kappa}$ of $T^{k}$ (for any $\kappa$ at this point). Together with the inequality 
$-\left\Vert f^{\prime}(x^k)-f^{\prime}(x^{*})\right\Vert ^{2}\leq-2\mu\left[f(x^k)-f(x^{*})-\left\langle f^{\prime}(x^{*}),x^k-x^{*}\right\rangle \right]$~\cite[Thm. 2.1.10]{nes-book}, that yields:
\begingroup\makeatletter\def\f@size{9}\check@mathfonts
\begin{align}
\mathbb{E}[T^{k+1}]&-T^{k} \leq -\frac{1}{\kappa}T^{k} + \left(\frac{1}{n}-\frac{2c\stepsize(L-\mu)}{L} -2c\stepsize^2\mu\beta \right)\left[f(x^k)-f(x^{*})-\left\langle f^{\prime}(x^{*}),x^k-x^{*}\right\rangle \right]\nonumber \\
 &   +\left(\frac{1}{\kappa}+2(1+\beta^{-1})c\stepsize^{2}L-\frac{1}{n}\right)\left[\frac{1}{n}\sum_{i}f_{i}(\phi_{i}^{k})-f(x^{*})-\frac{1}{n}\sum_{i}\left\langle f_{i}^{\prime}(x^{*}),\phi_{i}^{k}-x^{*}\right\rangle \right]\nonumber \\
 &   +\left(\frac{1}{\kappa}-\stepsize\mu\right)c\left\Vert x^k-x^{*}\right\Vert ^{2}+\left((1+\beta)\stepsize-\frac{1}{ L}\right) c\stepsize \mathbb{E}\left\Vert f_{j}^{\prime}(x^k)-f_{j}^{\prime}(x^{*})\right\Vert ^{2} . \label{eq:constants-strong}
\end{align}
\endgroup
Note that each of the terms in square brackets are positive, and it
can be readily verified that our assumed values for the constants
($\stepsize=\frac{1}{2(\mu n+L)}$, $c=\frac{1}{2\stepsize(1-\stepsize\mu)n}$, and $\kappa=\frac{1}{\stepsize\mu}$), together with $\beta=\frac{2\mu n+L}{L}$ ensure
that each of the quantities in round brackets are non-positive (the constants were determined by setting all the round brackets to zero except the second one  --- see~\citep{adefazio-thesis2014} for the details). \\
\textbf{Adaptivity to strong convexity result:} Note that when using the $\stepsize=\frac{1}{3L}$ step size, the same $c$ as above can be used with $\beta=2$ and $\frac{1}{\kappa}=\min\left\{ \frac{1}{4n},\,\frac{\mu}{3L}\right\}$ to ensure non-positive terms.
\end{proof}
\begin{cor}
Note that $c\left\Vert x^{k}-x^{*}\right\Vert ^{2}\leq T^{k}$, and therefore
by chaining the expectations, plugging in the constants explicitly and using $\mu (n-0.5) \leq \mu n$ to simplify the expression, we get:
\begingroup\makeatletter\def\f@size{9}\check@mathfonts
\[
\mathbb{E}\left[\left\Vert x^{k}-x^{*}\right\Vert ^{2}\right]\leq\left(1-\frac{\mu}{2(\mu n+L)}\right)^{k}\left[\left\Vert x^{0}-x^{*}\right\Vert ^{2} 
+ \frac{n}{\mu n + L}
\left[f(x^{0}) -\left\langle f^{\prime}(x^{*}),x^{0}-x^{*}\right\rangle  -f(x^{*})\right]\right].
\]
\endgroup
Here the expectation is over all choices of index $j^{k}$ up to step $k$. 
\end{cor}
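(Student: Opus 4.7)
The plan is to chain the one-step contraction from Theorem~\ref{thm:strong-convex-thm} into a geometric convergence statement on the Lyapunov function, then isolate the $\|x^k-x^*\|^2$ term by a simple lower bound on $T^k$, and finally substitute the stated values of $\stepsize$, $c$, and $\kappa$ to obtain the clean closed form. First, I would take total expectation on both sides of $\mathbb{E}[T^{k+1}\mid T^k]\leq(1-1/\kappa)T^k$ and iterate, giving $\mathbb{E}[T^k]\leq(1-1/\kappa)^k T^0$, where $1/\kappa = \stepsize\mu = \mu/(2(\mu n+L))$ matches the geometric rate displayed in the corollary.

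The key observation for extracting a bound on $\mathbb{E}\|x^k-x^*\|^2$ is that
\[
\frac{1}{n}\sum_{i}f_{i}(\phi_{i}^{k}) - f(x^{*}) - \frac{1}{n}\sum_{i}\langle f_{i}^{\prime}(x^{*}),\phi_{i}^{k}-x^{*}\rangle \;\geq\; 0,
\]
since each summand is the nonnegative Bregman divergence of the convex function $f_i$ between $\phi_i^k$ and $x^*$. Hence $c\|x^k-x^*\|^2 \leq T^k$, so dividing the iterated bound by $c$ yields
\[
\mathbb{E}\|x^k-x^*\|^2 \;\leq\; (1-1/\kappa)^k \cdot \frac{T^0}{c}.
\]

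Next, I would evaluate $T^0$ at the initial condition $\phi_i^0=x^0$, which collapses the first three terms of $T^0$ to $f(x^0)-f(x^*)-\langle f'(x^*),x^0-x^*\rangle$, so that
\[
T^0 = \bigl[f(x^0) - f(x^*) - \langle f'(x^*), x^0 - x^*\rangle\bigr] + c\,\|x^0 - x^*\|^2.
\]
Dividing by $c$ replaces the second term by $\|x^0-x^*\|^2$ and multiplies the bracket by $1/c = 2\stepsize(1-\stepsize\mu)n$. Finally, plugging in $\stepsize=1/(2(\mu n+L))$ gives $\stepsize\mu = \mu/(2(\mu n+L))$, hence
\[
\frac{1}{c} \;=\; \frac{n(2\mu n + 2L - \mu)}{2(\mu n + L)^2} \;=\; \frac{n\,(\mu(n-\tfrac12)+L)}{(\mu n+L)^2}.
\]
Applying the stated simplification $\mu(n-\tfrac12)\leq \mu n$ upper bounds this by $n/(\mu n+L)$, which is exactly the coefficient appearing in the corollary.

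The only mildly delicate step is the constant-wrangling at the end; everything else is direct. The nonnegativity argument is immediate from convexity of each $f_i$, and the initial evaluation of $T^0$ is trivial because $\phi_i^0=x^0$ makes the per-index terms identical. Thus the main ``obstacle'' is just making sure the algebra with $\stepsize$, $c$, and the $\mu(n-\tfrac12)\le\mu n$ approximation lines up to produce the coefficient $n/(\mu n+L)$ cleanly; there is no additional analytic content beyond what Theorem~\ref{thm:strong-convex-thm} already provides.
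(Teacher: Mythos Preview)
Your proposal is correct and follows exactly the approach the paper sketches in the corollary statement itself: chain the one-step contraction, use convexity of each $f_i$ to drop the nonnegative Bregman terms so that $c\|x^k-x^*\|^2\le T^k$, evaluate $T^0$ with $\phi_i^0=x^0$, and then simplify $1/c$ via $\mu(n-\tfrac12)\le\mu n$ to obtain the $n/(\mu n+L)$ coefficient. The algebra you carry out for $1/c$ is precisely the constant-wrangling the paper alludes to but does not write out.
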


\begin{figure}[t]
\begin{minipage}{5mm}
\begin{sideways}\small Function sub-optimality \end{sideways}
\end{minipage}
\begin{centering}
\begin{minipage}{135mm} \centering
\includegraphics[scale=0.96]{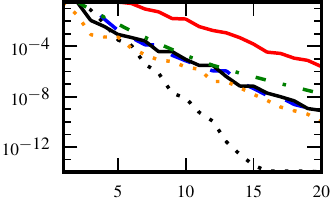} 
\includegraphics[scale=0.96]{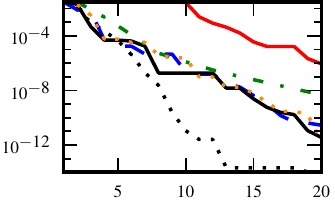}
\includegraphics[scale=0.96]{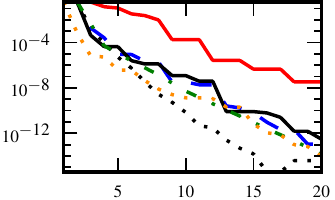}
\includegraphics[scale=0.96]{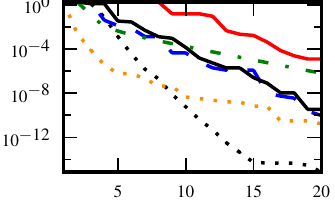}
\includegraphics[scale=0.96]{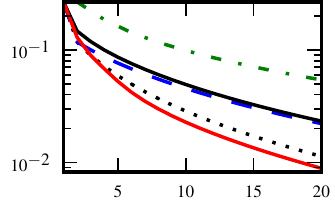} 
\includegraphics[scale=0.96]{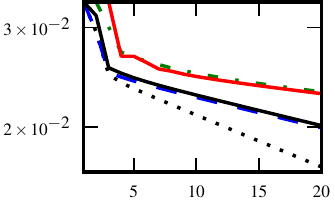}
\includegraphics[scale=0.96]{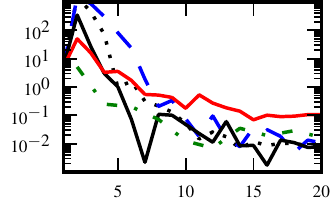}
\includegraphics[scale=0.96]{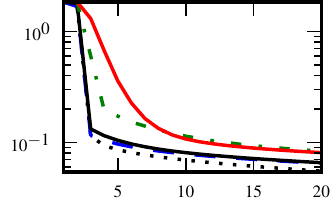}
\end{minipage}
\end{centering}
\center
{\small Gradient evaluations / $n$}

\includegraphics[scale=1.5,trim=0 0 30 0, clip]{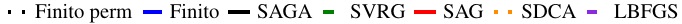}
\vskip -0.1in
\caption{\small From left to right we have the MNIST, COVTYPE, IJCNN1 and MILLIONSONG datasets. Top row is the L2 regularised case, bottom row the  L1 regularised case. \label{fig:plots}}
\vskip -0.2in
\end{figure}
\vspace{-2mm}
\section{Experiments}
\label{sec:experiments}
\vspace{-3mm}
We performed a series of experiments to validate the effectiveness of SAGA. We tested a binary classifier on MNIST, COVTYPE, IJCNN1 and a least squares predictor on MILLIONSONG. Details of these datasets can be found in \citep{finito}. We used the same code base for each method, just changing the main update rule. SVRG was tested with the recalibration pass used every $n$ iterations, as suggested in \citep{semi}. Each method had its step size parameter chosen so as to give the fastest convergence.

We tested with a L2 regulariser, which all methods support, and with a L1 regulariser on a subset of the methods. The results are shown in Figure \ref{fig:plots}. We can see that Finito (perm) performs the best on a per epoch equivalent basis, but it can be the most expensive method per step. SVRG is similarly fast on a per epoch basis, but when considering the number of gradient evaluations per epoch is double that of the other methods for this problem, it is middle of the pack. SAGA can be seen to perform similar to the non-permuted Finito case, and to SDCA. Note that SAG is slower than the other methods at the beginning. To get the optimal results for SAG, an adaptive step size rule needs to be used rather than the constant step size we used. In general, these tests confirm that the choice of methods should be done based on their properties as discussed in Section \ref{sec:related-work}, rather than their convergence rate. 

\nocite{prox-svrg}
\nocite{nes-book}

\pagebreak
\bibliography{nips2014}
\bibliographystyle{unsrt}

\newpage

\appendix

\section{The SDCA/Finito Midpoint Algorithm}
Using Lagrangian duality theory, SDCA can be shown at step $k$ as minimising the following lower bound:
\[
A^{k}(x)=\frac{1}{n}f_{j}(x) +
 \frac{1}{n}\sum^{n}_{i \neq j}\left[f_{i}(\phi_{i}^{k})+\left\langle f_{i}^{\prime}(\phi_{i}^{k}),x-\phi_{i}^{k}\right\rangle \right]+\frac{\mu}{2}\left\Vert x\right\Vert ^{2}.
\]
Instead of directly including the regulariser in this bound, we can use the standard strong convexity lower bound for each $f_i$, by removing $\frac{\mu}{2}\left\Vert x\right\Vert ^{2}$ and changing the expression in the summation to 
$f_{i}(\phi_{i}^{k})+\left\langle f_{i}^{\prime}(\phi_{i}^{k}),x-\phi_{i}^{k}\right\rangle +\frac{\mu}{2}\left\Vert x-\phi_{i}\right\Vert ^{2}$.
The transformation to having strong convexity within the $f_{i}$ functions yields the following simple modification to the algorithm:
 $\phi_{j}^{k+1}=\text{prox}_{(\mu(n-1))^{-1}}^{f_{j}}(z)$, where:
\[
z=\frac{1}{n-1}\sum_{i\neq j}\phi_{i}^{k}-\frac{1}{\mu(n-1)}\sum_{i\neq j}f_{i}^{\prime}(\phi_{i}^{k}).
\]
It can be shown that after this update:
\[
	x^{k+1}=\phi_{j}^{k+1}=\frac{1}{n}\sum_{i}\phi_{i}^{k+1}-\frac{1}{\mu n}\sum_{i}f_{i}^{\prime}(\phi_{i}^{k+1}).
\]
Now the similarity to Finito is apparent if this equation is compared Equation~8: 
$x^{k+1} = \frac{1}{n}\sum_{i} \phi^{k}_{i}  - \stepsize \sum^{n}_{i=1}f_{i}^{\prime}(\phi_{i}^{k})$. The only difference is that the vectors on the right hand side of the equation are at their values at step $k+1$ instead of $k$. Note that there is a circular dependency here, as $\phi_{j}^{k+1}:=x^{k+1}$ but $\phi_{j}^{k+1}$ appears in the definition of $x^{k+1}$. Solving the proximal operator is the resolution of the circular dependency. This mid-point between Finito and SDCA is interesting in it's own right, as it appears experimentally to have similar robustness to permuted orderings as Finito, but it has no tunable parameters like SDCA.

When the proximal operator above is fast to compute, say on the same order as just evaluating $f_j$, then SDCA can be the best method among those discussed. It is a little slower than the other methods discussed here, but it has no tunable parameters at all. It is also the only choice when each $f_i$ is not differentiable. The major disadvantage of SDCA is that it can not handle non-strongly convex problems directly. Although like most methods, adding a small amount of quadratic regularisation can be used to recover a convergence rate. It is also not adapted to use proximal operators \emph{for the regulariser} in the composite objective case. The requirement of computing the proximal operator of each loss $f_i$ initially appears to be a big disadvantage, however there are variants of SDCA that remove this requirement, but they introduce additional downsides.

\section{Lemmas}

\begin{lem}
\label{thm:strong-lb}Let $f$ be $\mu$-strongly convex and have Lipschitz continuous gradients with constant $L$. Then we have for all $x$ and $y$:
\begin{align*}
f(x) \geq {} & f(y)+\left\langle f^{\prime}(y),x-y\right\rangle +\frac{1}{2\left(L-\mu \right)}\left\Vert f^{\prime}(x)-f^{\prime}(y)\right\Vert ^{2} \\
& +\frac{\mu L}{2\left(L-\mu \right)}\left\Vert y-x\right\Vert ^{2}+\frac{\mu}{\left(L-\mu \right)}\left\langle f^{\prime}(x)-f^{\prime}(y),y-x\right\rangle .
\end{align*}
\end{lem}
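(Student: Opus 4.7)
My plan is to reduce the claim to a standard co-coercivity inequality for smooth convex functions by subtracting off the quadratic part of $f$.

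First I would define $g(x) := f(x) - \tfrac{\mu}{2}\|x\|^2$. Since $f$ is $\mu$-strongly convex, $g$ is convex, and since $\nabla f$ is $L$-Lipschitz, a short calculation shows $\nabla g(x) = \nabla f(x) - \mu x$ is $(L-\mu)$-Lipschitz. These two properties let me invoke the well-known inequality (Nesterov, Thm.~2.1.5) for a convex function with $L'$-Lipschitz gradient,
\[
g(x) \geq g(y) + \langle g'(y), x-y\rangle + \frac{1}{2L'} \bigl\| g'(x) - g'(y) \bigr\|^2,
\]
applied with $L' = L-\mu$.

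Next I would substitute $g(x) = f(x) - \tfrac{\mu}{2}\|x\|^2$ and $g'(x) = f'(x) - \mu x$ into the above and collect terms. The quadratic pieces $\tfrac{\mu}{2}\|x\|^2 - \tfrac{\mu}{2}\|y\|^2 - \mu\langle y, x-y\rangle$ combine neatly into $\tfrac{\mu}{2}\|x-y\|^2$, producing the intermediate inequality
\[
f(x) \geq f(y) + \langle f'(y), x-y\rangle + \frac{\mu}{2}\|x-y\|^2 + \frac{1}{2(L-\mu)} \bigl\| f'(x) - f'(y) - \mu(x-y) \bigr\|^2.
\]
Then I would expand the last squared norm as $\|f'(x)-f'(y)\|^2 - 2\mu\langle f'(x)-f'(y), x-y\rangle + \mu^2\|x-y\|^2$, sign-flipping the cross term to match the $\langle f'(x)-f'(y), y-x\rangle$ form in the statement. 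Finally, adding the two $\|x-y\|^2$ coefficients $\tfrac{\mu}{2} + \tfrac{\mu^2}{2(L-\mu)}$ gives $\tfrac{\mu L}{2(L-\mu)}$, which is exactly the coefficient appearing in the lemma.

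There is no real obstacle here; the only mild subtlety is the justification that $g$ has $(L-\mu)$-Lipschitz gradient rather than merely $(L+\mu)$-Lipschitz, which follows from the fact that $\mu$-strong convexity of $f$ together with $L$-smoothness implies the eigenvalues of the (generalized) Hessian of $f$ lie in $[\mu, L]$, so those of $g$ lie in $[0, L-\mu]$. Once that tightening is in hand, the whole proof is essentially an algebraic bookkeeping exercise.
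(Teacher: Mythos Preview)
Your proposal is correct and follows essentially the same route as the paper: define $g(x)=f(x)-\tfrac{\mu}{2}\|x\|^2$, observe that $g$ is convex with $(L-\mu)$-Lipschitz gradient, apply Nesterov's Thm.~2.1.5 to $g$, and then substitute back and simplify. The paper's proof is in fact terser than yours, omitting the intermediate algebraic steps you spell out.
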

\begin{proof}
Define the function $g$ as $g(x)=f(x)-\frac{\mu}{2}\left\Vert x\right\Vert ^{2}$.
Then the gradient is $g^{\prime}(x)=f^{\prime}(x)-\mu x$. $g$~has a
Lipschitz gradient with constant $L-\mu$. By convexity, we have~\cite[Thm. 2.1.5]{nes-book}:
\[
g(x)\geq g(y)+\left\langle g^{\prime}(y),x-y\right\rangle +\frac{1}{2(L-\mu)}\left\Vert g^{\prime}(x)-g^{\prime}(y)\right\Vert ^{2}.
\]
Substituting in the definition of $g$ and $g^{\prime},$ and simplifying the
terms gives the result.\end{proof}

\begin{lem}
Let $f(x)=\frac{1}{n}\sum_{i=1}^{n}f_{i}(x)$. Suppose each $f_i$ is $\mu$-strongly convex and has Lipschitz continuous gradients with constant $L$. Then for all $x$ and $x^{*}$:
\begingroup\makeatletter\def\f@size{9}\check@mathfonts
\[
\left\langle f^{\prime}(x),x^{*}-x\right\rangle  \leq 
\frac{L-\mu}{L}\left[f(x^{*})-f(x)\right]-\frac{\mu}{2}\left\Vert x^{*}-x\right\Vert ^{2}-\frac{1}{2Ln}\sum_{i}\left\Vert f_{i}^{\prime}(x^{*})-f_{i}^{\prime}(x)\right\Vert ^{2}-\frac{\mu}{L}\left\langle f^{\prime}(x^{*}),x-x^{*}\right\rangle. 
\]
\endgroup
\end{lem}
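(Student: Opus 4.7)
The plan is to apply Lemma~\ref{thm:strong-lb} componentwise to each $f_i$ and then average. Since each $f_i$ is $\mu$-strongly convex with $L$-Lipschitz gradient, I would instantiate that lemma with $y=x$ and evaluation point $x^*$, which gives
\[
f_i(x^*) \geq f_i(x) + \langle f_i'(x), x^*-x\rangle + \frac{1}{2(L-\mu)}\|f_i'(x^*)-f_i'(x)\|^2 + \frac{\mu L}{2(L-\mu)}\|x^*-x\|^2 + \frac{\mu}{L-\mu}\langle f_i'(x^*)-f_i'(x),\, x-x^*\rangle.
\]
Averaging over $i=1,\dots,n$ collapses the first-order terms into $f'(x)$ and $f'(x^*)$, while the squared gradient terms turn into the average $\frac{1}{n}\sum_i\|f_i'(x^*)-f_i'(x)\|^2$ that appears in the target inequality.

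Next I would move $\langle f'(x), x^*-x\rangle$ to the left and isolate the remaining terms. The key manipulation is the cross term $\frac{\mu}{L-\mu}\langle f'(x^*)-f'(x), x-x^*\rangle$: split it as $\frac{\mu}{L-\mu}\langle f'(x^*), x-x^*\rangle - \frac{\mu}{L-\mu}\langle f'(x), x-x^*\rangle$, and observe that $-\langle f'(x),x-x^*\rangle = \langle f'(x),x^*-x\rangle$. This second piece combines with the original $\langle f'(x), x^*-x\rangle$ term on the left-hand side to produce a coefficient of $1+\frac{\mu}{L-\mu}=\frac{L}{L-\mu}$ multiplying $\langle f'(x), x^*-x\rangle$.

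Finally I would multiply the whole inequality through by $\frac{L-\mu}{L}$. This produces exactly the desired coefficients: the $[f(x^*)-f(x)]$ term acquires the factor $\frac{L-\mu}{L}$, the squared-gradient term $\frac{1}{2(L-\mu)n}\sum_i\|\cdot\|^2$ becomes $\frac{1}{2Ln}\sum_i\|\cdot\|^2$, the $\|x^*-x\|^2$ coefficient simplifies from $\frac{\mu L}{2(L-\mu)}\cdot\frac{L-\mu}{L}=\frac{\mu}{2}$, and the remaining inner product picks up the factor $\frac{\mu}{L}$, matching the statement exactly (signs included, since $\langle f'(x^*), x-x^*\rangle$ is what appears on the right).

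The computation is purely algebraic, so there is no serious obstacle; the only thing to be careful about is the bookkeeping of signs and the observation that the cross term must be split so that its $f'(x)$ component is absorbed on the left-hand side — this is what produces the precise $\frac{L-\mu}{L}$ scaling in front of $[f(x^*)-f(x)]$ rather than a larger factor. Without that step one would be stuck with the weaker inequality obtained directly from averaging.
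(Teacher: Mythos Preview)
Your proposal is correct and follows the same approach as the paper: apply Lemma~\ref{thm:strong-lb} to each $f_i$, average, and rearrange. Your choice to instantiate Lemma~\ref{thm:strong-lb} with $y=x$ (and the evaluation point taken as $x^*$) is in fact the substitution that makes the algebra go through cleanly; the paper's one-line proof says ``using $y=x^*$'', but that appears to be a slip in the write-up, and your explicit bookkeeping of the cross term and the final $\frac{L-\mu}{L}$ rescaling fills in exactly the details the paper omits.
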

\begin{proof}
This is a straight-forward corollary of Lemma \ref{thm:strong-lb}, using $y=x^*$, and averaging over the $f_i$ functions.
\end{proof}

\begin{lem}
\label{lem:grad-diff-phii}
We have that for all $\phi_{i}$ and $x^{*}$:
\[
\frac{1}{n}\sum_{i}\left\Vert f_{i}^{\prime}(\phi_{i})-f_{i}^{\prime}(x^{*})\right\Vert ^{2}
\leq 2L\left[\frac{1}{n}\sum_{i}f_{i}(\phi_{i})-f(x^{*})-\frac{1}{n}\sum_{i}\left\langle f_{i}^{\prime}(x^{*}),\phi_{i}-x^{*}\right\rangle \right].
\]
\end{lem}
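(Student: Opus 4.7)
The plan is to apply the standard ``co-coercivity'' style bound for convex functions with $L$-Lipschitz gradients to each $f_i$ individually and then average. Specifically, for any convex function $g$ with an $L$-Lipschitz gradient, one has (this is a well-known consequence of Nesterov's Theorem 2.1.5)
\[
\frac{1}{2L}\left\Vert g^{\prime}(x)-g^{\prime}(y)\right\Vert ^{2} \;\leq\; g(x)-g(y)-\left\langle g^{\prime}(y),x-y\right\rangle.
\]
This is exactly the inequality obtained from Lemma~\ref{thm:strong-lb} in the supplementary material by setting $\mu=0$, so I would either cite Lemma~\ref{thm:strong-lb} with $\mu=0$ or quote Nesterov directly.

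The main step is to apply this inequality to each $f_i$ with the substitution $x \leftarrow \phi_i$ and $y \leftarrow x^{*}$, giving
\[
\frac{1}{2L}\left\Vert f_i^{\prime}(\phi_i)-f_i^{\prime}(x^{*})\right\Vert ^{2} \;\leq\; f_i(\phi_i)-f_i(x^{*})-\left\langle f_i^{\prime}(x^{*}),\phi_i-x^{*}\right\rangle.
\]
Then I would sum over $i=1,\dots,n$, divide by $n$, and use the definition $f(x^{*})=\tfrac{1}{n}\sum_i f_i(x^{*})$ to collect the middle term. Multiplying both sides by $2L$ yields exactly the stated inequality.

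There is essentially no obstacle here: the only subtlety is to recognise that the inequality we need is the ``convex + Lipschitz gradient'' form rather than the strongly convex form, so the $\mu$ assumption plays no role (though it is harmless since $\mu$-strong convexity is preserved for each $f_i$). No strong convexity, no cross terms, and no cancellation arguments are needed — the proof is a one-line application followed by averaging.
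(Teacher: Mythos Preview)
Your proposal is correct and is essentially identical to the paper's proof: the paper applies the same inequality $f(y)\geq f(x)+\langle f'(x),y-x\rangle+\tfrac{1}{2L}\|f'(x)-f'(y)\|^2$ to each $f_i$ with $y=\phi_i$, $x=x^{*}$, and sums. Your observation that this is Lemma~\ref{thm:strong-lb} with $\mu=0$ (equivalently Nesterov's Theorem~2.1.5) matches the paper's citation exactly.
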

\begin{proof}
Apply the standard inequality $f(y)\geq f(x)+\left\langle f^{\prime}(x),y-x\right\rangle +\frac{1}{2L}\left\Vert f^{\prime}(x)-f^{\prime}(y)\right\Vert ^{2}$,
with $y=\phi_{i}$ and $x=x^{*}$, for each $f_{i}$, and sum. \end{proof}

\begin{lem}
\label{lem:wchange}
It holds that for any $\phi_{i}^{k}$, $x^{*}$, $x^k$
and $\beta>0$, with $w^{k+1}$ as defined in Equation~1: 
\begin{align*}
\mathbb{E}\left\Vert w^{k+1}-x^k-\stepsize f^{\prime}(x^{*})\right\Vert ^{2} 
\leq {} & \stepsize^2 (1+\beta^{-1}) \mathbb{E}\left\Vert f_{j}^{\prime}(\phi_{j}^{k})-f_{j}^{\prime}(x^{*})\right\Vert ^{2}
+ \stepsize^2 (1+\beta) \mathbb{E}\left\Vert f_{j}^{\prime}(x^k)-f_{j}^{\prime}(x^{*})\right\Vert ^{2} \\
&- \stepsize^2 \beta \left\Vert f^{\prime}(x^k)-f^{\prime}(x^{*}) \right\Vert ^{2}.
\end{align*}
\end{lem}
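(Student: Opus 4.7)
The plan is to substitute the definition of $w^{k+1}$ to turn the LHS into the expected squared norm of a convenient random vector, then use the variance decomposition before applying Young's inequality. First, from~(\ref{eq:main-update}) and $\phi_j^{k+1}=x^k$, one obtains
\begin{equation*}
w^{k+1}-x^k+\stepsize f'(x^*) = -\stepsize\bigl[X_j - (Y_j - \bar Y)\bigr],
\end{equation*}
where $X_j := f_j'(x^k)-f_j'(x^*)$, $Y_j := f_j'(\phi_j^k)-f_j'(x^*)$, and $\bar Y := \tfrac{1}{n}\sum_i(f_i'(\phi_i^k)-f_i'(x^*)) = \mathbb{E}[Y_j]$. (The quantity actually plugged into Lemma~\ref{lem:wchange} inside the proof of Theorem~\ref{thm:strong-convex-thm} is $w^{k+1}-x^k+\stepsize f'(x^*)$, so I read the sign in the lemma statement as a typo.) Setting $W := X_j - (Y_j-\bar Y)$, the key structural observation is that $\mathbb{E}[W] = \mathbb{E}[X_j] = f'(x^k)-f'(x^*)$, which is exactly the quantity appearing in the negative term on the RHS.

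Next, I would apply the variance identity $\mathbb{E}\|W\|^2 = \|\mathbb{E}[W]\|^2 + \mathbb{E}\|W-\mathbb{E}[W]\|^2$ and note that $W-\mathbb{E}[W] = (X_j-\mathbb{E}[X_j]) - (Y_j-\bar Y)$. Young's inequality $\|a-b\|^2 \le (1+\beta)\|a\|^2 + (1+\beta^{-1})\|b\|^2$ then yields
\begin{equation*}
\mathbb{E}\|W-\mathbb{E}[W]\|^2 \le (1+\beta)\mathbb{E}\|X_j-\mathbb{E}[X_j]\|^2 + (1+\beta^{-1})\mathbb{E}\|Y_j-\bar Y\|^2.
\end{equation*}
Bounding $\mathbb{E}\|X_j-\mathbb{E}[X_j]\|^2 = \mathbb{E}\|X_j\|^2 - \|f'(x^k)-f'(x^*)\|^2$ (variance identity again) and $\mathbb{E}\|Y_j-\bar Y\|^2 \le \mathbb{E}\|Y_j\|^2$ (dropping the non-positive $-\|\bar Y\|^2$), the $\|f'(x^k)-f'(x^*)\|^2$ contributions from the outer variance decomposition ($+1$) and from Young's inequality ($-(1+\beta)$) combine cleanly to $-\beta\|f'(x^k)-f'(x^*)\|^2$. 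Multiplying through by $\stepsize^2$ recovers the claimed bound.

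I do not foresee a serious obstacle: the manipulation is purely algebraic once the decomposition $W = X_j - (Y_j-\bar Y)$ is in hand. The one subtle point is the choice to pull $\mathbb{E}[W] = f'(x^k)-f'(x^*)$ out via the variance identity \emph{before} applying Young's inequality---applying Young's directly to $W$ without first centering would produce a term of the form $-(1+\beta^{-1})\|\bar Y\|^2$ instead of the desired $-\beta\|f'(x^k)-f'(x^*)\|^2$, and $\bar Y$ is not the quantity that enters the Lyapunov contraction argument in Theorem~\ref{thm:strong-convex-thm}.
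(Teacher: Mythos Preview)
Your proposal is correct and follows essentially the same route as the paper's proof: both arguments write the quantity as $\gamma$ times a random vector with mean $f'(x^k)-f'(x^*)$, apply the variance decomposition $\mathbb{E}\|W\|^2=\|\mathbb{E}W\|^2+\mathbb{E}\|W-\mathbb{E}W\|^2$ to extract that mean, split $W-\mathbb{E}W$ into the centered $\phi$-piece and the centered $x^k$-piece via Young's inequality with parameter $\beta$, and then apply the variance identity once more to each piece (dropping $-\|\bar Y\|^2$ for the $\phi$-term). Your observation about the sign in the lemma statement is also right: the paper's own proof (and its use inside Theorem~\ref{thm:strong-convex-thm}) works with $w^{k+1}-x^k+\gamma f'(x^*)$, so the ``$-$'' in the displayed statement is indeed a typo.
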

\begin{proof}
We follow a similar argument as occurs in the SVRG proof \citep{svrg} for this term, but with a tighter argument. The tightening comes from using $\left\Vert x+y\right\Vert ^{2}\leq(1+\beta^{-1})\left\Vert x\right\Vert ^{2}+(1+\beta)\left\Vert y\right\Vert ^{2}$ instead of the simpler  $\beta=1$ case they use.
The other key trick is the use of the standard variance decomposition $\mathbb{E}[\left\Vert X-\mathbb{E}[X]\right\Vert ^{2}]=\mathbb{E}[\left\Vert X\right\Vert ^{2}]-\left\Vert \mathbb{E}[X]\right\Vert ^{2}$ three times.
\begingroup\makeatletter\def\f@size{9}\check@mathfonts
\begin{align*}
 &  \mathbb{E}\left\Vert w^{k+1}-x^k+\stepsize f^{\prime}(x^{*})\right\Vert ^{2} \\
  =  {} & \mathbb{E}\bigg\Vert \underbrace{-\frac{\stepsize}{ n}\sum_{i}f_{i}^{\prime}(\phi_{i}^{k})+\stepsize  f^{\prime}(x^{*})+\stepsize  \left[f_{j}^{\prime}(\phi_{j}^{k})-f_{j}^{\prime}(x^k)\right]}_{:= \,\,\stepsize X}\bigg\Vert ^{2}\\
  =  {} & \stepsize^{2} \mathbb{E}\Bigg\Vert \overbrace{\Bigg[f_{j}^{\prime}(\phi_{j}^{k})-f_{j}^{\prime}(x^{*})-\frac{1}{n}\sum_{i}f_{i}^{\prime}(\phi_{i}^{k})+f^{\prime}(x^{*})\Bigg]
 -\Bigg[f_{j}^{\prime}(x^k)-f_{j}^{\prime}(x^{*}) }^{X} - \overbrace{ f^{\prime}(x^k) + f^{\prime}(x^{*})\Bigg]}^{\mathbb{E}[X]} \Bigg\Vert ^{2}  
 + \stepsize^{2} \bigg\Vert \overbrace{f^{\prime}(x^k)-f^{\prime}(x^{*})}^{\mathbb{E}[X]} \bigg\Vert ^{2} \\
 \leq {} & \stepsize^{2}(1+\beta^{-1})\mathbb{E}\left\Vert f_{j}^{\prime}(\phi_{j}^{k})-f_{j}^{\prime}(x^{*})-\frac{1}{n}\sum_{i}f_{i}^{\prime}(\phi_{i}^{k})+f^{\prime}(x^{*})\right\Vert ^{2} \\
 & + \stepsize^{2}(1+\beta) \mathbb{E}\left\Vert f_{j}^{\prime}(x^k)-f_{j}^{\prime}(x^{*}) 
       - f^{\prime}(x^k) + f^{\prime}(x^{*}) \right\Vert ^{2}
    + \stepsize^{2}\left\Vert f^{\prime}(x^k)-f^{\prime}(x^{*}) \right\Vert ^{2}\\
 & \textrm{\qquad \qquad (use variance decomposition twice more):} \\
 \leq {} & \stepsize^{2}(1+\beta^{-1})\mathbb{E}\left\Vert f_{j}^{\prime}(\phi_{j}^{k})-f_{j}^{\prime}(x^{*})\right\Vert ^{2}+\stepsize^{2}(1+\beta)\mathbb{E}\left\Vert f_{j}^{\prime}(x^k)-f_{j}^{\prime}(x^{*})\right\Vert ^{2} - \stepsize^{2} \beta \left\Vert f^{\prime}(x^k)-f^{\prime}(x^{*}) \right\Vert ^{2}.
\end{align*}
\endgroup
\end{proof}

\section{Non-strongly-convex Problems}

\begin{thm}
When each $f_{i}$ is convex, using $\stepsize=\frac{1}{3L}$,
we have for $\bar{x}^{k}=\frac{1}{k}\sum_{t=1}^{k}x^{t}$ that:
\[
\mathbb{E}\left[F(\bar{x}^{k})\right]-F(x^{*})\leq\frac{4n}{k}\left[\frac{2L}{n}\left\Vert x^{0}-x^{*}\right\Vert ^{2}+f(x^{0})-\left\langle f^{\prime}(x^{*}),x^{0}-x^{*}\right\rangle -f(x^{*})\right].
\]
Here the expectation is over all choices of index $j^{k}$ up to step $k$. 
\end{thm}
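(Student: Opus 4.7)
The plan is to adapt the Lyapunov-function analysis of Theorem~\ref{thm:strong-convex-thm} to the $\mu=0$ regime, aiming for a one-step inequality of the form
\[
\mathbb{E}[T^{k+1}] + \alpha\,\mathbb{E}[F(x^{k+1})-F(x^*)] \leq T^k
\]
for some constant $\alpha > 0$, where $T$ has the same shape as in Theorem~\ref{thm:strong-convex-thm} but with a different constant in front of $\|x^k-x^*\|^2$:
\[
T^k := \tfrac{1}{n}\sum_{i}f_{i}(\phi^k_{i})-f(x^{*})-\tfrac{1}{n}\sum_{i}\langle f_{i}^{\prime}(x^{*}),\phi^k_{i}-x^{*}\rangle + c\,\|x^k-x^*\|^2.
\]
Matching $T^0$ (recall $\phi_i^0=x^0$) to the bracketed quantity in the target bound forces $1/\alpha = 4n$ and $c/\alpha = 8L$, i.e.\ $\alpha = 1/(4n)$ and $c = 2L/n$. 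Given such a one-step inequality, summing over $t = 0, \ldots, K-1$, dropping $\mathbb{E}[T^K] \geq 0$, dividing by $K$, and applying Jensen's inequality to the convex function $F$ along $\bar{x}^K = \tfrac{1}{K}\sum_{t=1}^K x^t$ immediately yields the claim.

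The one-step bookkeeping closely follows Theorem~\ref{thm:strong-convex-thm}: the expectation of the first two ($\phi$-averaged) terms of $T^{k+1}$ is computed as in that proof, and Lemmas~\ref{lem:ip-bound} (specialized to $\mu=0$) and~\ref{lem:grad-diff-phii} handle the inner-product $-2c\stepsize\langle f'(x^k),x^k-x^*\rangle$ and the $\mathbb{E}\|f_j'(\phi_j^k)-f_j'(x^*)\|^2$ terms respectively. The substantive change is in the treatment of $c\|x^{k+1}-x^*\|^2$: the strongly convex proof used bare non-expansiveness and leaned on the slack $-c\stepsize\mu\|x^k-x^*\|^2$ to absorb leftovers, but that slack vanishes here, and moreover we need to produce the $h(x^{k+1})-h(x^*)$ component of $F(x^{k+1})-F(x^*)$. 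I would therefore invoke the three-point property of the prox,
\[
\|x^{k+1}-x^*\|^2 \leq \|w^{k+1}-x^*\|^2 - \|w^{k+1}-x^{k+1}\|^2 - 2\stepsize[h(x^{k+1})-h(x^*)],
\]
obtained from the $(1/\stepsize)$-strong convexity of the prox objective, which directly grows the $h$ part of $F(x^{k+1})-F(x^*)$. The residual $-c\|w^{k+1}-x^{k+1}\|^2$ is then combined with $L$-smoothness $f(x^{k+1})\leq f(x^k)+\langle f'(x^k),x^{k+1}-x^k\rangle+\tfrac{L}{2}\|x^{k+1}-x^k\|^2$ to transform the naturally occurring $f(x^k)$ contribution (coming from $\mathbb{E}[\tfrac{1}{n}\sum_i f_i(\phi_i^{k+1})]$) into the desired $f(x^{k+1})$ on the left-hand side.

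With $\stepsize=1/(3L)$, $c=2L/n$, and $\beta=2$ (the triple flagged in the ``adaptive to strong convexity'' remark at the end of the proof of Theorem~\ref{thm:strong-convex-thm}), one then verifies that every remaining coefficient in the analogue of~\eqref{eq:constants-strong} has the correct sign after the target $-\tfrac{1}{4n}[F(x^{k+1})-F(x^*)]$ has been extracted: the $\phi$-averaged Bregman coefficient, the coefficient of $\mathbb{E}\|f_j'(x^k)-f_j'(x^*)\|^2$, and the net coefficient of $\|w^{k+1}-x^{k+1}\|^2$ (positive from smoothness, negative from the three-point inequality) should all be nonpositive. These reduce to short numerical inequalities.

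The main obstacle will be the coupling between the $-c\|w^{k+1}-x^{k+1}\|^2$ term from the prox inequality and the $\tfrac{L}{2}\|x^{k+1}-x^k\|^2$ term from $L$-smoothness: a triangle inequality relating $\|x^{k+1}-x^k\|^2$ to $\|w^{k+1}-x^{k+1}\|^2$ and $\|w^{k+1}-x^k\|^2$ injects cross-terms that must be rerouted into the variance budget of Lemma~\ref{lem:wchange} without breaking sign conditions. A minor but important care-point is that the one-step inequality naturally yields $F(x^{k+1})$ rather than $F(x^k)$, which is precisely why the theorem defines $\bar{x}^K$ starting at $t=1$.
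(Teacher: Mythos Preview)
Your high-level skeleton (one-step descent $\mathbb{E}[T^{k+1}]+\tfrac{1}{4n}\mathbb{E}[F(x^{k+1})-F(x^*)]\le T^k$, telescope, Jensen, average starting at $t=1$) matches the paper exactly, and your observation about why $\bar x^K$ starts at $t=1$ is right. The substantive difference is in how the $F(x^{k+1})-F(x^*)$ term is produced.

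The paper does \emph{not} try to manufacture $F(x^{k+1})$ by combining the three-point prox inequality with $L$-smoothness as you propose. Instead it \emph{splits} the quadratic coefficient as $(c+\alpha)\|x^k-x^*\|^2$ and treats the two pieces by different mechanisms: the $c$-piece is bounded exactly as in Theorem~\ref{thm:strong-convex-thm} via non-expansiveness and Lemma~\ref{lem:wchange} (with $\mu=0$), while the $\alpha$-piece invokes a ready-made lemma from prox-SVRG,
\[
\alpha\,\mathbb{E}\|x^{k+1}-x^*\|^2 \le \alpha\|x^k-x^*\|^2 - 2\alpha\gamma\,\mathbb{E}[F(x^{k+1})-F(x^*)] + 2\alpha\gamma^2\,\mathbb{E}\|\Delta\|^2,
\]
with $\Delta$ the zero-mean gradient error; this delivers $F(x^{k+1})-F(x^*)$ directly, already combining the $f$ and $h$ parts. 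The constants that close the argument are $\beta=1$, $c=\tfrac{3L}{2n}$, $\alpha=\tfrac{3L}{8n}$ (so $c+\alpha=\tfrac{15L}{8n}\le \tfrac{2L}{n}$), not the $\beta=2$ from the adaptive-to-SC remark you borrowed.

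What you are sketching is essentially an inline re-derivation of that prox-SVRG lemma. That can be made to work, but two points in your plan need correction. First, extracting $h(x^{k+1})-h(x^*)$ and $f(x^{k+1})-f(x^*)$ \emph{separately} is dangerous because neither has a sign; the standard route combines the three-point inequality, smoothness, and convexity of $f$ so that the full $F(x^{k+1})-F(x^*)$ appears with a single coefficient $2\gamma$ per unit of the quadratic weight, and the cross term $\langle\Delta,x^{k+1}-x^*\rangle$ is split as $\langle\Delta,x^{k+1}-x^k\rangle+\langle\Delta,x^k-x^*\rangle$ (the latter has zero mean). Second, with a single $c=2L/n$ handled this way you would get coefficient $2c\gamma=\tfrac{4}{3n}$ on $F(x^{k+1})-F(x^*)$, not $\tfrac{1}{4n}$; the paper's split into $c+\alpha$ is precisely what lets it route only the $\alpha$-share through the $F$-producing inequality while keeping the variance budget of Lemma~\ref{lem:wchange} balanced. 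If you push your single-$c$ route, you should expect different (and not the stated) constants, or you will end up rediscovering the $c/\alpha$ split.
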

\begin{proof}
A more detailed version of this proof is available in \citep{adefazio-thesis2014}.
We proceed by using a similar argument as in Theorem~1, 
but we add an additional $\alpha\left\Vert x^k-x^{*}\right\Vert ^{2}$
together with the existing $c\left\Vert x^k-x^{*}\right\Vert ^{2}$
term in the Lyapunov function.

We will bound $\alpha\left\Vert x^k-x^{*}\right\Vert ^{2}$ in a different
manner to $c\left\Vert x^k-x^{*}\right\Vert ^{2}$. Define $\Delta=-\frac{1}{\stepsize}\left(w^{k+1}-x^k\right)-f^{\prime}(x^k)$,
the difference between our approximation to the gradient at $x^k$ and
true gradient. Then instead of using the non-expansiveness property
at the beginning, we use a result proved for prox-SVRG~\cite[2nd eq. on p.12]{prox-svrg}:
\[
\alpha \mathbb{E}\left\Vert x^{k+1}-x^{*}\right\Vert ^{2}\leq\alpha\left\Vert x^k-x^{*}\right\Vert ^{2}-2\alpha\stepsize\mathbb{E}\left[F(x^{k+1})-F(x^{*})\right]+2\alpha\stepsize^{2}\mathbb{E}\left\Vert \Delta\right\Vert ^{2}.
\]
Although their quantity $\Delta$ is different, they only use the
property that $\mathbb{E}[\Delta]=0$ to prove the above equation. A full proof of this property for the SAGA algorithm that follows their argument appears in \citep{adefazio-thesis2014}.

To bound the $\Delta$ term, a small modification of the argument in Lemma \ref{lem:wchange} can be used, giving:
\[
\mathbb{E}\left\Vert \Delta\right\Vert ^{2} \leq \left(1+\beta^{-1}\right)\mathbb{E}\left\Vert f_{j}^{\prime}(\phi_{j}^{k})-f_{j}^{\prime}(x^{*})\right\Vert ^{2}+\left(1+\beta\right)\mathbb{E}\left\Vert f_{j}^{\prime}(x^{k})-f_{j}^{\prime}(x^{*})\right\Vert ^{2}.
\]
 Applying this gives:
\begin{align*}
\alpha \mathbb{E}\left\Vert x^{k+1}-x^{*}\right\Vert ^{2} \leq {} & \alpha\left\Vert x^k-x^{*}\right\Vert ^{2}-2\alpha\stepsize\mathbb{E}\left[F(x^{k+1})-F(x^{*})\right]\\
& +2(1+\beta^{-1})\alpha\stepsize^2\mathbb{E}\left\Vert f_{j}^{\prime}(\phi_{j}^{k})-f_{j}^{\prime}(x^{*})\right\Vert ^{2}+2\left(1+\beta\right)\alpha\stepsize^{2}\mathbb{E}\left\Vert f_{j}^{\prime}(x^k)-f_{j}^{\prime}(x^{*})\right\Vert ^{2}.
\end{align*}
As in Theorem~1, 
we then apply Lemma \ref{lem:grad-diff-phii} to bound $\mathbb{E}\left\Vert f_{j}^{\prime}(\phi_{j}^{k})-f_{j}^{\prime}(x^{*})\right\Vert ^{2}$. Combining with the rest of the Lyapunov function as was derived in Theorem~1 gives (we basically add the $\alpha$ terms to inequality~(10) with $\mu=0$):
\begin{eqnarray*}
 &  & \mathbb{E}[T^{k+1}]-T^{k} \\
 & \leq & \left(\frac{1}{n}-2c\gamma\right)\left[f(x^{k})-f(x^{*})-\left\langle f^{\prime}(x^{*}),x^{k}-x^{*}\right\rangle \right]-2\alpha\gamma\mathbb{E}\left[F(x^{k+1})-F(x^{*})\right]\\
 &  & +\left(4(1+\beta^{-1})\alpha L\gamma^{2}+2(1+\beta^{-1})cL\gamma^{2}-\frac{1}{n}\right)\left[\frac{1}{n}\sum_{i}f_{i}(\phi_{i}^{k})-f(x^{*})-\frac{1}{n}\sum_{i}\left\langle f_{i}^{\prime}(x^{*}),\phi_{i}^{k}-x^{*}\right\rangle \right] \\
 &  & +\left((1+\beta)c\gamma+2(1+\beta)\alpha\gamma-\frac{c}{L}\right)\gamma\mathbb{E}\left\Vert f_{j}^{\prime}(x^{k})-f_{j}^{\prime}(x^{*})\right\Vert ^{2}.
\end{eqnarray*}
As before, the terms in square brackets are positive by convexity. Given that our choice of step size is $\gamma=\frac{1}{3L}$ (to match the adaptive to strong convexity step size), we can set the three round brackets to zero by using $\beta=1$, $c=\frac{3L}{2n}$ and $\alpha=\frac{3L}{8n}$. We thus obtain:
\[
\mathbb{E}[T^{k+1}]-T^{k}\leq-\frac{1}{4n}\mathbb{E}\left[F(x^{k+1})-F(x^{*})\right].
\]
These expectations are conditional on information from step $k$. We now take the expectation with respect to all previous steps, yielding $\mathbb{E}[T^{k+1}]-\mathbb{E}[T^{k}] \leq-\frac{1}{4n}\mathbb{E}\left[F(x^{k+1})-F(x^{*})\right]$, where all expectations are unconditional. Further negating and summing for $k$ from $0$ to $k-1$ results in telescoping of the $T$ terms, giving:
\[
\frac{1}{4n}\mathbb{E}\left[\sum_{t=1}^{k}\left[F(x^{t})-F(x^{*})\right]\right]  \leq  T^{0}-\mathbb{E}[T^{k}].
\]
We can drop the $-\mathbb{E}\left[T^{k}\right]$ term since $T^{k}$ is
always positive. Then we apply convexity to pull the summation inside
of $F$, and multiply through by $4n/k$, giving:
\[
\mathbb{E}\left[F(\frac{1}{k}\sum_{t=1}^{k}x^{t})-F(x^{*})\right]
\leq \frac{1}{k}\mathbb{E}\left[\sum_{t=1}^{k}\left[F(x^{t})-F(x^{*})\right]\right]
\leq \frac{4n}{k}T^{0}.
\]
We get a $(c+\alpha) = \frac{15L}{8n} \leq \frac{2L}{n}$ term that we use in $T^{0}$ for simplicity.
\end{proof}

\section{Example Code for Sparse Least Squares \& Ridge Regression}
The SAGA method is quite easy to implement for dense gradients, however the implementation for sparse gradient problems can be tricky.
The main complication is the need for just-in-time updating of the elements of the iterate vector. This is needed to avoid having
to do any full dense vector operations at each iteration.
We provide below a simple implementation for the case of least-squares problems that illustrates how to correctly do this. The code is in the compiled Python (Cython) language.

\begin{python}
import random
import numpy as np
cimport numpy as np

cimport cython
from cython.view cimport array as cvarray

# Performs the lagged update of x by g.
cdef inline lagged_update(long k, double[:] x, double[:] g, unsigned long[:] lag, 
                          long[:] yindices, int ylen, double[:] lag_scaling, double a):
    
    cdef unsigned int i
    cdef long ind
    cdef unsigned long lagged_amount = 0
    
    for i in range(ylen):
        ind = yindices[i]
        lagged_amount = k-lag[ind]
        lag[ind] = k
        x[ind] += lag_scaling[lagged_amount]*(a*g[ind])

# Performs x += a*y, where x is dense and y is sparse.
cdef inline add_weighted(double[:] x, double[:] ydata , long[:] yindices, int ylen, double a):
    cdef unsigned int i
    
    for i in range(ylen):
        x[yindices[i]] += a*ydata[i]

# Dot product of a dense vector with a sparse vector
cdef inline spdot(double[:] x, double[:] ydata , long[:] yindices, int ylen):
    cdef unsigned int i
    cdef double v = 0.0
    
    for i in range(ylen):
        v += ydata[i]*x[yindices[i]]
        
    return v

def saga_lstsq(A,  double[:] b, unsigned int maxiter, props):
    
    # temporaries
    cdef double[:] ydata
    cdef long[:] yindices
    cdef unsigned int i, j, epoch, lagged_amount
    cdef long indstart, indend, ylen, ind
    cdef double cnew, Aix, cchange, gscaling
    
    # Data points are stored in columns in CSC format.
    cdef double[:] data = A.data
    cdef long[:] indices = A.indices
    cdef long[:] indptr = A.indptr
    
    cdef unsigned int m = A.shape[0] # dimensions
    cdef unsigned int n = A.shape[1] # datapoints
    
    cdef double[:] xk = np.zeros(m)
    cdef double[:] gk = np.zeros(m)
    
    cdef double eta = props['eta'] # Inverse step size = 1/gamma
    cdef double reg = props.get('reg', 0.0) # Default 0
    cdef double betak = 1.0 # Scaling factor for xk.
    
    # Tracks for each entry of x, what iteration it was last updated at.
    cdef unsigned long[:] lag = np.zeros(m, dtype='I')
    
    # Initialize gradients
    cdef double gd = -1.0/n
    for i in range(n):
        indstart = indptr[i]
        indend = indptr[i+1]
        ydata = data[indstart:indend]
        yindices = indices[indstart:indend]
        ylen = indend-indstart
        add_weighted(gk, ydata, yindices, ylen, gd*b[i])

    # This is just a table of the sum the geometric series (1-reg/eta)
    # It is used to correctly do the just-in-time updating when
    # L2 regularisation is used.
    cdef double[:] lag_scaling = np.zeros(n*maxiter+1)
    lag_scaling[0] = 0.0
    lag_scaling[1] = 1.0
    cdef double geosum = 1.0
    cdef double mult = 1.0 - reg/eta
    for i in range(2,n*maxiter+1):
        geosum *= mult
        lag_scaling[i] = lag_scaling[i-1] + geosum
    
    # For least-squares, we only need to store a single
    # double for each data point, rather than a full gradient vector.
    # The value stored is the A_i * betak * x product
    cdef double[:] c = np.zeros(n)
    
    cdef unsigned long k = 0 # Current iteration number
    
    for epoch in range(maxiter):
            
        for j in range(n):
            if epoch == 0:
                i = j 
            else:
                i = np.random.randint(0, n)
            
            # Selects the (sparse) column of the data matrix containing datapoint i.
            indstart = indptr[i]
            indend = indptr[i+1]
            ydata = data[indstart:indend]
            yindices = indices[indstart:indend]
            ylen = indend-indstart
            
            # Apply the missed updates to xk just-in-time
            lagged_update(k, xk, gk, lag, yindices, ylen, lag_scaling, -1.0/(eta*betak))
            
            Aix = betak * spdot(xk, ydata, yindices, ylen)
            
            cnew = Aix
            cchange = cnew-c[i]
            c[i] = cnew
            betak *= 1.0 - reg/eta
            
            # Update xk with sparse step bit (with betak scaling)
            add_weighted(xk, ydata, yindices, ylen, -cchange/(eta*betak))
            
            k += 1
            
            # Perform the gradient-average part of the step
            lagged_update(k, xk, gk, lag, yindices, ylen, lag_scaling, -1.0/(eta*betak))
            
            # update the gradient average
            add_weighted(gk, ydata, yindices, ylen, cchange/n) 
    
    # Perform the just in time updates for the whole xk vector, so that all entries are up-to-date.
    gscaling = -1.0/(eta*betak)
    for ind in range(m):
        lagged_amount = k-lag[ind]
        lag[ind] = k
        xk[ind] += lag_scaling[lagged_amount]*gscaling*gk[ind]
    return betak * np.asarray(xk)
\end{python}

\end{document}